\theoremstyle{plain}
\newtheorem{theorem}{Theorem}[section]
\newtheorem{lemma}[theorem]{Lemma}
\theoremstyle{definition}
\newtheorem{assumption}[theorem]{Assumption}
\theoremstyle{remark}
\definecolor{mydarkgreen}{RGB}{39,130,67}
\definecolor{mydarkred}{RGB}{192,25,25}
\newcommand{\green}{\color{mydarkgreen}}
\newcommand{\esp}[1]{\mathbb{E}\left[ #1 \right]}
\newcommand{\E}{\mathbb{E}}
\newcommand{\R}{\mathbb{R}}
\definecolor{violett}{HTML}{FF00FF}
\newcommand{\cN}{\mathcal{N}}
\newcommand{\cT}{\mathcal{T}}
\newcommand{\cB}{\mathcal{B}}
\providecommand{\vg}{\mathbf{g}}
\newcommand{\Gxi}{\vg(\xx)}
\newcommand{\clip}{{\rm clip}_c}
\newcommand{\norm}[1]{\left\|#1\right\|}
\newcommand{\sqnorm}[1]{\norm{#1}^2}
\newcommand{\lzlone}{$(L_0, L_1)$\xspace}
\providecommand{\norm}[1]{\left\lVert#1\right\rVert}
\providecommand{\R}{\mathbb{R}} %
\providecommand{\E}[1]{{\mathbb E}\left.#1\right. }        %
\DeclareMathOperator*{\argmin}{arg\,min}
\renewcommand{\gg}{\mathbf{g}}
\providecommand{\uu}{\mathbf{u}}
\providecommand{\xx}{\mathbf{x}}
\providecommand{\yy}{\mathbf{y}}
\providecommand{\zz}{\mathbf{z}}
\providecommand{\mI}{\mathbf{I}}
\providecommand{\cB}{\mathcal{B}}
\providecommand{\cD}{\mathcal{D}}
\providecommand{\cN}{\mathcal{N}}
\providecommand{\cO}{\mathcal{O}}
\providecommand{\cT}{\mathcal{T}}
\renewcommand{\paragraph}[1]{\textbf{#1} }
\icmltitlerunning{Revisiting Gradient Clipping}
\begin{document}

\twocolumn[
\icmltitle{Revisiting Gradient Clipping: \\ Stochastic bias and tight convergence guarantees}

\icmlsetsymbol{equal}{*}

\begin{icmlauthorlist}
\icmlauthor{Anastasia Koloskova}{equal,epfl}
\icmlauthor{Hadrien Hendrikx}{equal,inria}
\icmlauthor{Sebastian U. Stich}{cispa}
\end{icmlauthorlist}

\icmlaffiliation{epfl}{EPFL, Switzerland}
\icmlaffiliation{inria}{Inria Grenoble, France (work done in part while at EPFL)}
\icmlaffiliation{cispa}{CISPA Helmholtz Center for Information Security, Germany}

\icmlcorrespondingauthor{Anastasia Koloskova}{anastasia.koloskova@epfl.ch}
\icmlcorrespondingauthor{Hadrien Hendrikx}{hadrien.hendrikx@inria.fr}

\icmlkeywords{SGD, clipping}

\vskip 0.3in
]

\printAffiliationsAndNotice{\icmlEqualContribution} %

\begin{abstract}
Gradient clipping is a popular modification to standard (stochastic) gradient descent, at every iteration limiting the gradient norm to a certain value $c >0$. It is widely used for example for stabilizing the training of deep learning models~\citep{Goodfellow2016:DL_and_clipping}, or for enforcing differential privacy~\citep{abadi16:dp-sgd}. Despite popularity and simplicity of the clipping mechanism, its convergence guarantees often require specific values of $c$ and strong noise assumptions. 

In this paper, we give convergence guarantees that show precise dependence on arbitrary clipping thresholds $c$ and show that our guarantees are tight with both deterministic and stochastic gradients. In particular, we show that (i) for deterministic gradient descent, the clipping threshold only affects the higher-order terms of convergence, (ii) in the stochastic setting convergence to the true optimum cannot be guaranteed under the standard noise assumption, even under arbitrary small step-sizes. We give matching upper and lower bounds for convergence of the gradient norm when running clipped SGD, and illustrate these results with experiments.
\end{abstract}

\section{Introduction}

This paper focuses on solving general minimization problem of the form
\begin{equation}\label{eq:prob}
    \min_{\xx \in \R^d} \left\{f(\xx) := \E_{\xi \sim \cD} [f_\xi(\xx)] \right\},
\end{equation}
where $f$ is a possibly non-convex, and possibly stochastic function. This setting covers many applications, e.g.\ it covers optimizing deterministic functions if $f_\xi \equiv f~\forall \xi$. It also covers minimizing the empirical loss in machine learning applications, where $\cD$ represents the uniform distribution over training datapoints, and $f_\xi(\xx)$ is the loss of model $\xx$ on the datapoint $\xi$. 

We focus on gradient descent methods with \emph{gradient clipping} for solving \eqref{eq:prob}. Given a clipping radius $c > 0$, step-size $\eta > 0$, and starting from a point $\xx_0 \in \R^d$ the gradient clipping algorithm performs the following iterations: 
\begin{align} \label{eq:clipped_sgd}
    \xx_{t+1} = \xx_t - \eta \vg_t\,, && \ \text{with}\ &&  \vg_t = \clip(\nabla f_\xi(\xx_t))\,,
\end{align}
where $\vg_t$ is a clipped stochastic gradient, and the clipping operator is defined  as
\begin{align}\label{eq:clipping}
    \clip(\uu) = \min\left(1, \frac{c}{\norm{\uu}}\right) \uu\,, \qquad \text{for}\ \uu \in \R^d.
\end{align}

Gradient clipping is widely used to \emph{stabilize} the training of neural networks, by preventing large occasional gradient values from harming it \cite{Goodfellow2016:DL_and_clipping}. This is particularly useful for mitigating outliers in the training data, and training recurrent models \cite{Pascanu2012:understand_clip, Pascanu13:rnn_clipping}, in which the noise can induce very large gradients.

Gradient clipping is also an essential part of privacy-preserving machine learning. The widely-used Gaussian Mechanism~\cite{Dwork14:DP_foundations} adds noise to the individual gradients to add uncertainty about their true value. Yet, it requires the gradients to have bounded norms for the privacy guarantees to hold. In practice, bounded gradients are enforced through clipping \cite{abadi16:dp-sgd}. 

Gradient clipping has already been widely studied, as we detail in the next section. However, many works choose a specific value for the clipping threshold $c$ in order to guarantee convergence. This suggests that $c$ should be carefully tuned in practice, which is highly undesirable, in particular since the clipping threshold might be dictated by other (e.g., privacy) concerns. Besides, most works impose strong assumptions on the stochastic gradient noise, through either large batches (and thus small stochastic noise), angle conditions, or uniform boundedness of the norm, that might not hold in practice.

In this work, we precisely characterize how the clipping threshold $c$ affects the convergence properties of clipped-SGD for \emph{any} clipping threshold $c$. %

We consider deterministic and stochastic functions separately, as clipping affects these two settings in different ways. 

In the \emph{deterministic case}, clipping only changes the magnitude of the applied gradients, but not their direction. This means that clipped gradient descent can reach the critical points of $f$, however slower. Intuitively, as the algorithm converges, the gradients become small in magnitude and are not clipped eventually. This means that clipping affects only the speed during the first phase when the gradients are large in magnitude. The main challenge is to tightly characterize this overhead. 

In the \emph{stochastic case}, the story is different:\ the individual stochastic gradients can be large even though the expected gradient is small. Even at the critical points of $f$, where the expected (full) gradient is zero, there is some probability that individual stochastic gradients are clipped. Moreover, as we do not assume any symmetry of the stochastic gradients, the expected clipped gradient might be non-zero even at critical points of $f$, forcing the algorithm to drift from these critical points. 
The direct consequence of this is that clipped SGD \emph{does not converge to the critical points of $f$ in general} \cite{chen20:geometric_clipping}, but only to some neighborhood. In this paper we study the bias introduced by clipping and show that it depends on the noise variance $\sigma^2$ and the clipping parameter $c$, that we precisely define in Section~\ref{sec:def}. As we will further detail in Section~\ref{sec:related_work}, existing works circumvent this difficulty either by using large clipping thresholds or large mini-batches, or by making strong assumptions on the noise such as uniform boundness, restricted angles between stochastic gradients, etc., and usually requiring specific values for $c$. Instead, we tightly analyze the convergence of clipped SGD and \emph{characterize precisely} the bias introduced by clipping without any additional assumptions. 

More specifically, our contributions are the following:
\begin{itemize}[leftmargin=12pt,itemsep=1pt]
    \item In the deterministic setting, we analyze the convergence behavior of clipped gradient descent for non-convex, convex and strongly convex functions. Our analysis shows that in all the cases after some transient regime, clipping does not affect the convergence rate. This initial phase does not affect the leading term of convergence in the convex and non-convex cases. However, in the strongly convex case, this unavoidable initial phase does not ensure linear forgetting of the initial conditions, resulting in a substantial slowdown. 
    \item For stochastic gradients, we show that clipped SGD under the `heavy-tailed' assumption converges to a neighbourhood of size $\min\{\sigma, \sigma^2 / c\}$, measured in terms of the gradient norm.
    \item We show that this neighborhood size is tight: clipped SGD reduces the gradient norm up to $\min\{\sigma, \sigma^2 / c\}$ indeed, provided the step-size is small enough.
    \item We frame our results using the \lzlone-smoothness assumption~\citep{zhang19:clippingL0-L1}, a standard relaxation of smoothness that is well suited to analyzing clipped algorithms.
\end{itemize}

Through these results, we aim at painting a thorough and accurate landscape of the convergence guarantees of clipping \emph{under the same assumptions as standard SGD, and for any clipping threshold $c$}. Our goal is that these improved bounds will allow to tighten guarantees for all downstream applications, e.g.\ privacy, that use clipped-SGD convergence results as black box. Indeed, the clipping threshold is often viewed as an external parameter of the problem in these cases, whereas our flexible guarantees allow to optimize the bounds for $c$ and trade-off convergence speed (or precision) and application-specific requirements.

\begin{table*}
    \caption{Comparison of key assumptions and illustration of complexity estimates in the non-convex deterministic case (variance $\sigma=0$).}
    \label{tab:relatedwork}
    \resizebox{\textwidth}{!}{
    \centering
    \begin{tabular}{llllllr}
        \toprule[1pt]
        reference       & smoothness  & variance bound &  clipping threshold & further assumptions   & \multicolumn{2}{l}{rate (non-convex, $\sigma=0$)}\\
        \midrule 
        \citet{zhang19:clippingL0-L1}  & 2nd-order \lzlone & uniform bd. & $c = \Theta(\min\{L_0, \frac{L_0}{L_1}\})$ & & %
        $\cO\left(\frac{1}{\sqrt{\eta T c}}\right)$ & \hspace{-4cm} $\eta\leq \min \Big\{ \frac{1}{10 L_0}, \frac{1}{10 cL_1} \Big\}$  %
        \\
         \citet{zhang2020improved}     & {\green \lzlone } & uniform bd. & $c = \Theta(\max\{\epsilon, \frac{L_0}{L_1}\})$ & & $\cO\left(\frac{1}{\sqrt{\eta T}} \right) $ &  \hspace{-4cm}  $\eta\leq \frac{1}{10 L_0} $%
         \\ \midrule
        \citet{chen20:geometric_clipping} &  $L$ &  {\green expectation} &  {\green arbitrary} & pos. skewness &  $\cO\left(\frac{1}{\sqrt{\eta T}} + \frac{1}{\eta T c} + \sqrt{\eta L c^2} \right)$ & \hspace{-4cm}  \\
        \citet{qian21:understanding_clipping} &  {\green \lzlone }&  {\green expectation} &  {\green arbitrary} & pos. alignment &  $\cO\left(\frac{1}{\sqrt{\eta T}} + \frac{1}{\eta T c} + \sqrt{\eta L_0 c^2} + \sqrt{\eta L_1 c^3}\right) $ & \hspace{-4cm} $\eta \leq \frac{1}{4 c L_1}$\\
        \textbf{ours}   &  {\green \lzlone } & {\green expectation} & {\green arbitrary} &  & {\green $\cO \left(\frac{1}{\sqrt{\eta T}} + \frac{1}{\eta T c}\right)$} & \hspace{-1cm} {\green $\eta \leq \frac{1}{9(L_0 + c L_1)}$} \\
        \bottomrule    
    \end{tabular}
    }
\end{table*}

\subsection{Main assumptions}
\label{sec:def}
Before discussing related work we will first state the assumptions we use in our work. 

\paragraph{Assumption on smoothness.}
The widely used smoothness assumption in the optimization literature (e.g.~\citealp{nesterov2018lectures}) is the following:
\begin{assumption}[Smoothness]\label{a:smooth-classic} Function $f$ satisfies 
	\begin{align*}
	\norm{\nabla f(\xx) - \nabla f(\yy)} \leq L \norm{\xx - \yy} \,, && \forall \xx, \yy \in \R^d \,.
	\end{align*}
\end{assumption}
Despite its widespread use, this assumption can be restrictive, as the constant $L$ must capture the worst-case smoothness. 
\citet{zhang19:clippingL0-L1} experimentally discovered that for various deep learning tasks, the local smoothness constant $L$ decreases during training, and is proportional to the gradient norm. They reported that the local curvature (smoothness) in the final stages of training could be $1000$ times smaller than the curvature at the initialization point (for LSTM training on the PTB dataset).
\lzlone-smoothness \cite{zhang19:clippingL0-L1, zhang2020improved} has been proposed as a natural relaxation of the classical smoothness assumption.
\begin{assumption}[\lzlone-smoothness]\label{a:smooth}
	A differentiable function $f \colon \R^d \to \R$ is said to be \lzlone-smooth if it verifies for all $\xx, \yy \in \R^d$ with $\norm{\xx-\yy} \leq \frac{1}{L_1}$:
	\begin{equation}
	\norm{\nabla f(\xx) - \nabla f(\yy)} \leq (L_0 + \norm{\nabla f(\xx)} L_1) \norm{\xx - \yy} \,. \label{eq:424}
	\end{equation}
\end{assumption}
We use this as the main assumption in our work. This assumption recovers the standard smoothness Assumption~\ref{a:smooth-classic} by taking $L_1 = 0$. However, taking $L_1 > 0$ allows to obtain smooth-like properties for functions that would otherwise not be smooth, such as $\xx \mapsto \norm{\xx}^3$.
Moreover, it is possible that $L$-smooth functions are \lzlone-smooth with both of the constants $L_0, L_1$ significantly smaller than $L$, such as for the exponential function $x \mapsto e^x$.

Note that the imposed bound $\norm{\xx-\yy}\leq \frac{1}{L_1}$ in \eqref{eq:424} is essential, as otherwise the global growth of the gradients would be similarly restricted as for standard smooth functions (thereby excluding functions such as the mentioned $\xx \mapsto \norm{\xx}^3$).

In their work on clipping algorithms, \citet{zhang19:clippingL0-L1} used a slightly stronger smoothness condition that required second-order differentiability. For twice-differentiable functions $f$, they defined \lzlone-smoothness as
\begin{equation}
\norm{\nabla^2 f(\xx)} \leq L_0 + L_1 \norm{\nabla f(\xx)}\,, \qquad \forall \xx \in \R^d. \label{eq:secondorder}
\end{equation}
Later, \citet{zhang2020improved} noticed that the weaker Assumption~\ref{a:smooth} is sufficient for the study of clipping algorithms. We adopt their notion in our work.

\paragraph{Assumption on stochastic variance.}
Many works in clipping literature \citep{zhang19:clippingL0-L1,zhang2020improved, yang22:normalized_and_clipped_for_dp} assume the following 
\begin{assumption}[Uniform boundness]\label{def:unif_var}
	We say that the stochastic noise of $f_\xi$ is uniformly bounded by $\sigma^2$ if for all $\xx \in \R^d$, %
	\begin{equation}
	\Pr\left[\sqnorm{\nabla f_\xi(\xx) - \nabla f(\xx)} \leq \sigma^2\right] = 1.
    \label{eq:unif_var}
	\end{equation}
\end{assumption}
While this assumption allows to simplify the analysis of clipping algorithms, this is a very strong assumption. 

By making Assumption~\ref{def:unif_var} and using a sufficiently large clipping radius ($c > \sigma$), we can guarantee that stochastic gradients are not clipped at the critical points of $f$ where $\nabla f(\xx) = 0$. This ensures that the algorithm can converge to the exact critical points, simplifying the theoretical analysis in~\citep{zhang19:clippingL0-L1,zhang2020improved, yang22:normalized_and_clipped_for_dp}.

The uniform boundness Assumption~\ref{def:unif_var} is a strong assumption and may not always be reflective of the real-world scenarios. For instance, if gradients are perturbed by Gaussian noise, the assumption of uniform boundness does not hold. Additionally, in machine learning applications where $\nabla f_\xi(\xx)$ represents gradients of a model $\xx$ at different datapoints $\xi$ from a dataset $\xi \in \cD$, a uniform bound on $\sigma$ may be large if the dataset $\cD$ has even only one outlier point.

In this work, we use the weaker and the more standard variance definition instead \citep{Lan12:stochastic_opt, Dekel12:minibatch},
 sometimes called \emph{heavy tailed noise}~\citep{gorbunov20:acc_clipping}. %
\begin{assumption}[Bounded variance] \label{def:var}
	We say that the variance of $f_\xi$ is bounded %
    by $\sigma^2$ if for all $\xx \in \R^d$
	\begin{equation}
	\esp{\sqnorm{\nabla f_\xi(\xx) - \nabla f(\xx)}} \leq \sigma^2.
	\end{equation}
\end{assumption}
Note that uniform boundedness implies bounded variance (with the same constant), but not the other way round. 

\subsection{Related work}
\label{sec:related_work}

The literature on gradient clipping is already extensive and still very active. We present the most relevant contributions for our work below and display a selection in Table~\ref{tab:relatedwork}.

\paragraph{Clipping stabilizes learning.} Gradient clipping was originally proposed in \cite{Mikolov12:thesis_with_clipping} in order to tackle the gradient explosion problem in training of recurrent neural networks. \citet{zhang19:clippingL0-L1} proposed to theoretically explain the question why clipped SGD improves the stability of (stochastic) first-order methods, by imposing a relaxed second-order \lzlone-smoothness assumption (see Equation~\eqref{eq:secondorder}), and showing the convergence advantages of clipped SGD over unclipped SGD. However they rely on the strong Assumption~\ref{def:unif_var} for the stochastic variance and chose the clipping threshold to a specific large enough value. %
 The favorable convergence guarantees were then refined by~\citet{zhang2020improved}, while still relying on Assumption~\ref{def:unif_var} on the stochastic noise and choosing specific value for the clipping threshold. %
 \citet{vien21:stability_and_converg_of_clip} show that this is also the case in the non-smooth setting. %

\paragraph{Noise assumptions.} Gradient clipping is often analyzed under uniform boundness Assumption~\ref{def:unif_var} on the stochastic noise of the gradients in combination with choosing a large enough clipping threshold $c > \sigma$%
~\citep{zhang19:clippingL0-L1,zhang2020improved, yang22:normalized_and_clipped_for_dp}. Choosing large enough values of $c$ simplifies the theoretical analysis. However, in some applications the choice of the clipping threshold $c$ might be dictated by other constraints, such as privacy constraints. Especially because in many practical applications the stochastic noise is heavy-tailed \cite{Jingzhao20:adaptive_methods} it would entail large values of $\sigma$, and thus $c$.

To avoid the uniformly bounded noise assumption, some works impose other strong assumptions on the distribution of stochastic gradients. For instance, \citet{qian21:understanding_clipping} restrict the angle between stochastic gradients and the true gradient, and \citet{chen20:geometric_clipping} impose a symmetry assumption on the distribution of the stochastic gradients.
\citet{gorbunov20:acc_clipping} analyze clipping under bounded variance (see Assumption~\ref{def:var}), however, they impose a strong assumption of the size of the minibatches used to scale linearly with $T$, thus making the effective stochastic variance to be diminishing with the number of iterations $T$ as $\cO\bigl(\frac{\sigma^2}{T}\bigr)$.

In this paper we take a different route from all these works, and analyse clipped SGD under the much weaker \emph{bounded variance} assumption. Yet, instead of converging to the exact critical points of $f$, we quantify how large the drift due to clipping is, and thus obtain guarantees for any values of the batch size and $c$. 

\paragraph{Noiseless case.} Since the bulk of the assumptions concern the stochastic noise, our setting is the same as the papers mentioned above in the deterministic setting. However, in this case, we give sharper guarantees, essentially proving that clipping does not affect the leading convergence terms (see Table~\ref{tab:relatedwork}).

\paragraph{Clipped federated averaging.} \citet{zhang22:understanding_clip_federated} study clipping for the FedAvg \cite{McMahan2016:FedAvg} algorithm, by clipping the model differences sent to the server. However, bounded gradients are needed, and the convergence rate does not recover the rate of FedAvg when the clipping threshold $c \to \infty$. Moreover, clipped FedAvg is biased even when using deterministic gradients. \citet{Mingrui22:effifient_clipping} also study a clipped-FedAvg-like algorithm, and get rid of bias issues through assuming symmetric noise distributions around their means.  %

\paragraph{Differentially private SGD.} Differential privacy has become the gold standard for protecting privacy, thus raising interest from the stochastic optimization community~\citep{chaudhuri2011differentially,song2013stochastic, duchi2014privacy}. However, to ensure differential privacy, boundedness of the stochastic gradients~\citep{wang2017differentially,bassily2019private, das2022beyond} (or a related condition, such as Lipschitzness of the objective function) has to hold. This is rarely true in practice, but instead enforced via clipping, such as in the DP-SGD algorithm~\citep{abadi16:dp-sgd}. Indeed, Lipschitzness requires the gradients to be bounded, whereas smoothness only requires boundedness of the Hessian. Although smoothness implies Lipschitzness on a bounded domain, this bound is usually very conservative and leads to poor guarantees.

\citet{Bagdasaryan19:DP-for-DL-bad-effect} experimentally measure the effect of DP-SGD (clipping and additional noise) on model accuracy. They observe that the gradients do not converge to zero norm, so that the assumptions under which exact convergence is shown are often not verified indeed. Besides, underrepresented classes have higher gradient norm (so DP-SGD affects fairness).

\paragraph{Connection to adaptive methods. } It is worth noting that clipped SGD is related to adaptive methods, such as the Adam algorithm \cite{KingmaB14:Adam}, or normalized SGD \cite{Hazan15:norm,Kfir16:norm}, that also perform a scaling of the gradient.
However, 
these two algorithms are not equivalent to clipped SGD and the convergence results for the Adam algorithm~\cite{Reddi2018:adam, Jingzhao20:adaptive_methods, Zhang2022AdamCC} and normalized SGD \cite{Zhao2021:normalized} cannot be directly translated to the clipped SGD.

\section{Deterministic Setting}

In this section we consider gradient clipping algorithm \eqref{eq:clipped_sgd} with full (deterministic) gradients, i.e.\ with 
\begin{align}\label{eq:deterministic_grad}
\nabla f_\xi(\xx) \equiv \nabla f(\xx)\,, && \forall \xi \in \cD, \forall \xx \in \R^d.
\end{align}
In this setting, the clipping operator \eqref{eq:clipping} only changes the magnitude of the applied gradients, without changing its direction (as opposed to taking the expectation of clipped stochastic gradients). Thus, we can expect convergence to the exact minima, resp.\ critical points, of the function $f$. 
It still remains unclear how much does such a change in the magnitude of the gradients affect the convergence speed of the algorithm. %

In our theoretical results we show that the drastic slow down happens \emph{only} if the function $f$ is \emph{strongly convex}, in which case the initial conditions (distance to optimum) are not forgotten linearly anymore once clipping is applied. However, the leading term in the error $\epsilon$ is unaffected. If the function $f$ is either \emph{convex} or \emph{non-convex}, the clipping threshold $c$ does not affect the leading term of convergence, and affects \emph{only the higher-order terms}.

\subsection{Non-convex functions}
\begin{theorem}[non-convex]\label{thm:det_nc}
    If $f$ satisfies Assumption~\ref{a:smooth}, then clipped gradient descent \eqref{eq:clipped_sgd} with deterministic gradients \eqref{eq:deterministic_grad} and with stepsize $\eta \leq [9(L_0 + c L_1)]^{-1}$ guarantees an error:
    \begin{equation}
        \frac{1}{T}\sum_{t=1}^T \norm{\nabla f(\xx_t)}\leq \cO\left(\sqrt{\frac{F_0}{\eta T}} + \frac{F_0}{\eta T c}\right), \label{eq:gd_nonsquare}
    \end{equation}
    where $T$ is the number of iterations, $F_0 = f(\xx_0) - f^\star$.
\end{theorem}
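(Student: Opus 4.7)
The plan is to run a descent-lemma argument adapted to $(L_0,L_1)$-smoothness and to split into two regimes according to whether clipping is active. The first step is to derive a descent inequality: integrating Assumption~\ref{a:smooth} along the segment $[\xx_t,\xx_{t+1}]$ (valid provided $\norm{\xx_{t+1}-\xx_t}\leq 1/L_1$) gives
\begin{equation*}
f(\xx_{t+1}) \leq f(\xx_t) + \langle \nabla f(\xx_t), \xx_{t+1}-\xx_t\rangle + \frac{L_0 + L_1\norm{\nabla f(\xx_t)}}{2}\sqnorm{\xx_{t+1}-\xx_t}.
\end{equation*}
The update uses $\xx_{t+1}-\xx_t = -\eta \vg_t$ with $\norm{\vg_t}\leq c$, and the step-size condition $\eta \leq [9(L_0+cL_1)]^{-1}$ forces $\eta c \leq 1/(9L_1) < 1/L_1$, so the descent lemma is applicable at every step.

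Plug the update into the descent inequality and split cases. In the \emph{unclipped} regime ($\norm{\nabla f(\xx_t)}\leq c$), $\vg_t=\nabla f(\xx_t)$ and $\eta(L_0+L_1\norm{\nabla f(\xx_t)})\leq \eta(L_0+cL_1)\leq 1/9$, which yields
\begin{equation*}
f(\xx_t) - f(\xx_{t+1}) \;\geq\; \tfrac{\eta}{2}\sqnorm{\nabla f(\xx_t)}.
\end{equation*}
In the \emph{clipped} regime ($\norm{\nabla f(\xx_t)}>c$), $\norm{\vg_t}=c$ and $\langle \nabla f(\xx_t),\vg_t\rangle = c\norm{\nabla f(\xx_t)}$; using $c<\norm{\nabla f(\xx_t)}$ to rewrite $L_0\eta^2 c^2 \leq \eta L_0\cdot \eta c\norm{\nabla f(\xx_t)}$ and absorbing via the step-size condition gives
\begin{equation*}
f(\xx_t) - f(\xx_{t+1}) \;\geq\; \tfrac{\eta c}{2}\norm{\nabla f(\xx_t)}.
\end{equation*}

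Finally, telescope. Letting $S_1,S_2$ partition $\{0,\dots,T-1\}$ by regime, summing and using $f(\xx_0)-f(\xx_T)\leq F_0$ gives
\begin{equation*}
\tfrac{\eta}{2}\sum_{t\in S_1}\sqnorm{\nabla f(\xx_t)} + \tfrac{\eta c}{2}\sum_{t\in S_2}\norm{\nabla f(\xx_t)} \;\leq\; F_0.
\end{equation*}
The clipped sum is directly bounded by $2F_0/(\eta c)$. For the unclipped sum, Cauchy--Schwarz converts squares into norms: $\sum_{t\in S_1}\norm{\nabla f(\xx_t)} \leq \sqrt{|S_1|\sum_{t\in S_1}\sqnorm{\nabla f(\xx_t)}}\leq \sqrt{2TF_0/\eta}$. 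Adding the two contributions and dividing by $T$ yields the claimed $\cO(\sqrt{F_0/(\eta T)} + F_0/(\eta T c))$ bound.

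The main subtlety I expect is the clipped-regime descent: a naive application of the descent lemma leaves an $L_0\eta^2 c^2$ term that does \emph{not} vanish just from a small step-size, so one must exploit $\norm{\nabla f(\xx_t)}>c$ to rewrite it as a fraction of the linear-in-norm gain. The other point worth flagging is that I measure progress in $\tfrac{1}{T}\sum_t \norm{\nabla f(\xx_t)}$ (first power, not squared) precisely because the clipped regime only delivers linear-in-norm descent; this is exactly what produces the $\sqrt{F_0/(\eta T)}$ leading term rather than the usual $F_0/(\eta T)$ rate for smooth gradient descent, while the clipping overhead is confined to the higher-order $F_0/(\eta T c)$ term.
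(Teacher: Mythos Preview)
Your proof is correct and follows essentially the same approach as the paper: the paper obtains this theorem by specializing the stochastic result (Theorem~\ref{thm:csgd_large_c}) to $\sigma=0$, and that argument, once the noise terms vanish, reduces precisely to your two-regime descent analysis followed by a telescoping sum. The only cosmetic difference is in the final step converting the squared-norm sum to a first-power sum: you use Cauchy--Schwarz over $S_1$, whereas the paper uses the scalar inequality $x^2\geq 2\epsilon x-\epsilon^2$ optimized at $\epsilon=\sqrt{A}$; both yield the same $\sqrt{F_0/(\eta T)}$ term.
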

This theorem is a consequence of Theorem~\ref{thm:csgd_large_c} for $\sigma = 0$. %

\paragraph{Comparison to the unclipped gradient descent.}
The convergence rate of gradient descent (without clipping) assuming the standard $L$-smoothness Assumption~\ref{a:smooth-classic} is equal to \citep{ghadimi2013stochastic}:
\begin{align}
\frac{1}{T} \sum_{t = 1}^T \norm{\nabla f(\xx_t)}^2 \leq \cO \left(\frac{F_0}{\eta T}\right), \label{eq:gd_square}
\end{align}
where the stepsize must be smaller than $\eta \leq \frac{1}{L}$. In the future discussion we will assume that $L_0 + cL_1 \leq L$, as we can always choose $L_1$ to be zero. In many cases, both $L_0$ and $L_1$ are significantly smaller than $L$ (as discussed in Section~\ref{sec:def}).
Thus, compared to the unclipped gradient descent, clipped gradient descent \eqref{eq:clipped_sgd}:
\begin{itemize}[leftmargin=12pt,nosep,labelwidth=18pt,itemindent=6pt]
	\item[(i)] allows for larger stepsizes $\eta$ (up to the constant 9 in the stepsize constraint).
	This result is due to the refined \lzlone smoothness assumption and such an improvement in the stepsize has the same spirit as the discovery made by \citet{zhang19:clippingL0-L1} for the \lzlone second-order smoothness assumption~\eqref{eq:secondorder}, although their bound on the stepsize is different.
	\item[(ii)] has an additional term $\frac{F_0}{\eta T c}$ that depends on the clipping radius $c$. This term is of the order $\frac{1}{T}$, while the leading (the slowest decreasing, asymptotically dominating) term is of order $\frac{1}{\sqrt{T}}$. 
	
	If $c$ is small, this term will slow down the algorithm significantly. However, when $c$ is chosen larger than the final target accuracy $\epsilon$, clipping affects the convergence speed only by a constant factor.
	Intuitively, this is because the number of steps when clipping happens is only a constant fraction of the total required number of iterations to converge.\footnote{%
	Formally: because the final accuracy $\epsilon = \sqrt{\nicefrac{F_0}{\eta T}} + \nicefrac{F_0}{\eta T c} \geq \sqrt{\nicefrac{F_0}{\eta T}}$ , and thus if clipping threshold is larger than that, $c \geq \sqrt{\nicefrac{F_0}{\eta T}}$, then the convergence speed $\sqrt{\nicefrac{F_0}{\eta T}} + \nicefrac{F_0}{\eta T c} \leq 2 \sqrt{\nicefrac{F_0}{\eta T}} $ is affected only by a constant.
	}
	 As we frequently know the final target accuracy, our result shows that the clipping threshold could be set to avoid the adversarial effect of clipping. However, in practice the clipping threshold might be dictated by other needs. %
	\item[(iii)] has the different convergence measure $\frac{1}{T}\sum_{t=1}^T \norm{\nabla f(\xx_t)}$ instead of $\frac{1}{T} \sum_{t = 1}^T \norm{\nabla f(\xx_t)}^2$ that is more commonly used (e.g. in~\eqref{eq:gd_square} for unclipped gradient descent). %

\end{itemize}

\paragraph{Comparison to the prior work.} We summarized differences to the prior works in Table~\ref{tab:relatedwork}. \citet{zhang19:clippingL0-L1} and \citet{zhang2020improved} analyzed deterministic gradient clipping however setting the clipping threshold $c$ to some specific, large enough values.
\citet{qian21:understanding_clipping} and \citet{chen20:geometric_clipping} analyse clipped SGD under arbitrary choice of the clipping threshold $c$. In particular, assuming deterministic gradients ($\sigma = 0$), 
\citet{qian21:understanding_clipping} obtain the convergence rate of $\cO\left(\sqrt{\frac{F_0}{\eta T}} + \frac{F_0}{\eta T c} + c\sqrt{\eta L_0} + c^{3/2}\sqrt{\eta L_1}\right)$, $\eta < \nicefrac{1}{4 c L_1}$, as the two last terms $ c\sqrt{\eta L_0} + c^{3/2}\sqrt{\eta L_1}$ do not decrease to zero under the constant stepsizes $\eta$. %
 That is strictly worse than ours in Theorem~\ref{thm:det_nc}. \citet{chen20:geometric_clipping} prove the rate $\cO\left(\sqrt{\frac{F_0}{\eta T}} + \frac{F_0}{\eta T c} + c\sqrt{\eta L} \right)$ without any constraint on the stepsize, however they have to take small stepsizes $\eta = \nicefrac{1}{\sqrt{T}}$ as the term $c\sqrt{\eta L}$ is not decreasing in $T$. Notably, this terms prevents the error from converging to $0$ under constant step-sizes, which can be obtained in the deterministic setting, as we showed above. %

\subsection{Convex functions}
We now prove an equivalent theorem when $f$ is convex, i.e.\ assuming additionally:
\begin{assumption}[Convexity]\label{a:convex} Function $f$ satisfies 
	\begin{align*}
	f(\xx) - f(\yy) \leq \langle \nabla f(\xx), \xx - \yy\rangle\,, && \forall \xx, \yy \in \R^d \,.
	\end{align*}
	We also assume that infimum of $f$ is achieved in $\R^d$. 
\end{assumption}
\begin{theorem}[convex]\label{thm:det_cvx}
	If $f$ is $L$-smooth\footnote{We can relax this assumption. Equation~\eqref{eq:thm23} also holds with $L$ replaced by $L_T$, defined as $L_T := \max_{t \leq T }\{L_0 + L_1\norm{\nabla f(\xx_t)} \}$.} (Assumption~\ref{a:smooth-classic}), \lzlone smooth (Assumption~\ref{a:smooth}) and convex (Assumption~\ref{a:convex}), then clipped gradient descent \eqref{eq:clipped_sgd} with deterministic gradients \eqref{eq:deterministic_grad} and with stepsize $\eta \leq (L_0 + c L_1)^{-1}$ guarantees an error:
    \begin{equation}
        f(\xx_T) - f^\star \leq \cO \left(\frac{R_0^2}{\eta T } + \frac{R_0^4 L}{\eta^2 T^2 c^2} \right)\,, \label{eq:thm23}
    \end{equation}
    where $R_0^2 = \sqnorm{\xx_0 - \xx_\star}$, $f^\star = f(\xx_\star)$, and $\xx^\star~=~\argmin_{\xx} f(\xx)$. %
\end{theorem}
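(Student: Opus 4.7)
The plan is to adapt the standard distance-based potential analysis for smooth convex GD while carefully accounting for the clipping bias via a Cauchy--Schwarz estimate. Throughout I write $\gamma_t := \min\{1, c/\|\nabla f(\xx_t)\|\} \in (0, 1]$, so that $\vg_t = \gamma_t \nabla f(\xx_t)$ and $\|\vg_t\| \leq c$, and $D_t := f(\xx_t) - f^\star$. First, I would establish the descent lemma: combining Assumption~\ref{a:smooth} with the observation $\|\xx_{t+1}-\xx_t\| = \eta\|\vg_t\| \leq \eta c \leq 1/L_1$ gives access to the local smoothness $L_0 + L_1\|\nabla f(\xx_t)\|$, and under $\eta \leq (L_0 + cL_1)^{-1}$ a short case split on whether clipping is active yields $f(\xx_{t+1}) - f(\xx_t) \leq -(\eta\gamma_t/2)\|\nabla f(\xx_t)\|^2$. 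In particular $D_t$ is non-increasing, and $\eta^2\|\vg_t\|^2 \leq 2\eta(D_t - D_{t+1})$.

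Next, the one-step distance identity, using convexity to lower-bound $\langle \nabla f(\xx_t), \xx_t - \xx^\star\rangle \geq D_t$ and the descent estimate to control $\eta^2\|\vg_t\|^2$, gives $\|\xx_{t+1} - \xx^\star\|^2 \leq \|\xx_t - \xx^\star\|^2 - 2\eta\gamma_t D_t + 2\eta(D_t - D_{t+1})$. Telescoping over $t=0,\dots,T-1$ and invoking $D_0 \leq LR_0^2/2$ (from $L$-smoothness and $\nabla f(\xx^\star) = 0$) produces $\sum_{t<T}\gamma_t D_t = \cO(R_0^2/\eta)$. Combined with the monotonicity $D_T \leq D_t$, this converts to the crude bound $D_T \cdot \Gamma_T \leq \cO(R_0^2/\eta)$, where $\Gamma_T := \sum_{t<T}\gamma_t$. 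It then suffices to lower-bound $\Gamma_T$.

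The crux of the argument is the lower bound on $\Gamma_T$. Let $K := \{t < T : \|\nabla f(\xx_t)\| > c\}$ be the set of clipping steps. Summing the descent estimate over $t \in K$ (where the per-step decrease is at least $\eta c \|\nabla f(\xx_t)\|/2$) gives the clipping budget $\sum_{t \in K}\|\nabla f(\xx_t)\| \leq 2F_0/(\eta c) \leq LR_0^2/(\eta c)$. Decomposing $\Gamma_T = (T - |K|) + \sum_{t \in K}c/\|\nabla f(\xx_t)\|$ and applying Cauchy--Schwarz (Titu's form) to the clipping sum yields $\sum_{t \in K}c/\|\nabla f(\xx_t)\| \geq c|K|^2/\sum_{t \in K}\|\nabla f(\xx_t)\| \geq \eta c^2 |K|^2/(LR_0^2)$. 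Minimizing $T - |K| + \eta c^2 |K|^2/(LR_0^2)$ over $|K| \in [0, T]$ gives $\Gamma_T = \Omega(\min\{T, \eta c^2 T^2/(LR_0^2)\})$, and substitution into $D_T \leq \cO(R_0^2/(\eta\Gamma_T))$ produces exactly the two terms $R_0^2/(\eta T)$ and $LR_0^4/(\eta^2 T^2 c^2)$ of the theorem. The main obstacle is identifying this Cauchy--Schwarz step: the quadratic $|K|^2$ on its right-hand side is precisely what delivers the $T^{-2}$ dependence in the final bound, and without it one only recovers the weaker $LR_0^3/(\eta T c)$ correction.
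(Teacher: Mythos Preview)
Your approach is sound and takes a genuinely different route from the paper. The paper splits the distance recursion into clipped and unclipped iterations: on unclipped steps it uses Lemma~\ref{lem:L-smooth-conv-impl} to obtain $\|\xx_{t+1}-\xx^\star\|^2 \leq \|\xx_t-\xx^\star\|^2 - \eta D_t$, while on clipped steps it converts the per-step decrease to a bound on $\sqrt{D_t}$ via $\|\nabla f(\xx_t)\| \leq \sqrt{2L D_t}$; it then unifies both cases at the level of $\tfrac{1}{T}\sum_t \sqrt{D_t}$, applies monotonicity for the last iterate, and squares. Your route instead keeps a single weighted recursion $\sum_t \gamma_t D_t \lesssim R_0^2/\eta$, reduces the problem to lower-bounding the effective step count $\Gamma_T = \sum_t \gamma_t$, and handles the clipped steps through a clipping-budget bound $\sum_{t\in K}\|\nabla f(\xx_t)\| \lesssim F_0/(\eta c)$ combined with Cauchy--Schwarz in Engel form. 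This decoupling is clean, and the Cauchy--Schwarz step delivering $|K|^2$ (and hence the $T^{-2}$ rate) is an elegant device; the paper's argument is more direct but interleaves the two regimes throughout.

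One caveat worth fixing: bounding $\eta^2\|\vg_t\|^2$ by $2\eta(D_t - D_{t+1})$ and telescoping leaves a residual $2\eta D_0$, and invoking $D_0 \leq LR_0^2/2$ then gives only $\sum_t\gamma_t D_t \leq R_0^2/(2\eta) + LR_0^2/2$, which is $\cO(R_0^2/\eta)$ \emph{only when} $\eta L = \cO(1)$. This is not guaranteed, since the stepsize constraint is $\eta \leq (L_0 + cL_1)^{-1}$ and $L$ may be arbitrarily larger than $L_0 + cL_1$; as written you would incur an extra $(1+\eta L)$ factor in the final bound. The easy fix is to bound $\eta^2\|\vg_t\|^2 \leq \eta\gamma_t D_t$ directly (this is exactly what the paper does for this one step): on unclipped steps use Lemma~\ref{lem:L-smooth-conv-impl} to get $\|\nabla f(\xx_t)\|^2 \leq 2(L_0+cL_1)D_t$, and on clipped steps use the same lemma together with $\|\nabla f(\xx_t)\| \geq c$. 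The recursion then becomes $\|\xx_{t+1}-\xx^\star\|^2 \leq \|\xx_t-\xx^\star\|^2 - \eta\gamma_t D_t$, the $D_0$ term disappears, and the remainder of your argument (monotonicity, the $\Gamma_T$ lower bound via Cauchy--Schwarz) goes through unchanged.
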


In comparison, under the same assumptions as in Theorem~\ref{thm:det_cvx}, unclipped gradient descent converges at the rate
\begin{align*}
f(\xx_T) - f^\star \leq \cO\left(\frac{R_0^2}{\eta T }\right)
\end{align*}
under the condition that the stepsize is smaller than $\eta \leq L^{-1}$~\citep{nesterov2018lectures}. Similarly to the non-convex case, the convergence rate of the clipped gradient descent is slowed down by the higher-order term $\nicefrac{R_0^4 L}{\eta^2 T^2 c^2}$. Yet, again, it is enough to set the clipping threshold $c$ bigger than the final target accuracy $\epsilon$ (multiplied by  $\sqrt{L}$ this time) to avoid the slowdown effect of this term, since for high accuracies more time is actually spent using unclipped gradients.

Also, similarly to the non-convex case, clipped GD allows for the larger stepsizes $\eta$ that would result in the faster convergence. 

\subsection{Strongly convex functions}
In this section we consider strongly-convex functions $f$.

\begin{assumption}[strong-convexity]\label{a:strong-convex} There exists a constant $\mu > 0$ such that function $f$ satisfies for all $ \xx, \yy \in \R^d$,
	\begin{align*}
	f(\xx) - f(\yy) + \frac{\mu}{2} \norm{\xx - \yy}_2^2 \leq \langle \nabla f(\xx), \xx - \yy\rangle \,. %
	\end{align*}
\end{assumption}
Similarly to the convex and non-convex cases, clipping does not affect the leading term of convergence (as $\epsilon \rightarrow 0$) as we show in the following theorem. This is due to the fact that for any fixed $c>0$, gradients are eventually never clipped.

\begin{theorem}[Strongly convex case]
\label{thm:det_scvx}
If $f$ is $\mu$-strongly convex  (Assumption~\ref{a:strong-convex}), $L$-smooth\footnote{We can relax this assumption by defining instead $L := \max_{t \leq T }\{L_0 + L_1\norm{\nabla f(\xx_t)} \}$.} (Assumption~\ref{a:smooth-classic}) and \lzlone smooth (Assumption~\ref{a:smooth}), then clipped gradient descent \eqref{eq:clipped_sgd} with deterministic gradients \eqref{eq:deterministic_grad} and with stepsize $\eta \leq (L_0 + c L_1)^{-1}$ needs at most
\begin{align} \hskip-1ex
 T =\cO\left(\frac{1}{\mu \eta} \log \left( \frac{R_0^2}{\epsilon} \right) + \frac{R_0 }{c \eta} \min\left(\sqrt{\frac{L}{\mu}}, \frac{L R_0}{c}\right) \right)
\end{align}
iterations to reach accuracy $R_T^2 \leq \epsilon$,  where $R_t^2 = \norm{\xx_t - \xx_\star}^2$ and  $\xx^\star~=~\argmin_{\xx} f(\xx)$.
\end{theorem}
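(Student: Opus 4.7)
The plan is to set $R_t = \norm{\xx_t - \xx_\star}$ and split the iterations into a ``clipping'' regime (where $\norm{\nabla f(\xx_t)} > c$, so the update is a normalized step of length $\eta c$) and a ``no-clipping'' regime (where the update coincides with unclipped gradient descent). The first term $\frac{1}{\mu\eta}\log(R_0^2/\epsilon)$ in the iteration bound will come from linear convergence once clipping stops, and the second term $\frac{R_0}{c\eta}\min(\sqrt{L/\mu}, LR_0/c)$ from bounding the length of the clipping phase via two different arguments whose minimum is taken.

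In the no-clipping regime, combining strong convexity $\langle \nabla f(\xx_t), \xx_t - \xx_\star\rangle \geq \mu R_t^2$ with co-coercivity, together with $\eta \leq 1/L$, yields the standard contraction $R_{t+1}^2 \leq (1-\mu\eta) R_t^2$. Because $L$-smoothness and $\nabla f(\xx_\star) = 0$ give $\norm{\nabla f(\xx_t)} \leq L R_t$, any iterate with $R_t \leq c/L$ satisfies $\norm{\nabla f(\xx_t)} \leq c$; a simple induction using the contraction then shows that once the trajectory hits $R_{t^\star} \leq c/L$, clipping never occurs again, and at most $\lceil \frac{1}{\mu\eta}\log(R_0^2/\epsilon) \rceil$ further iterations suffice to reach $R_T^2 \leq \epsilon$.

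For the clipping regime I start from
\begin{equation*}
R_{t+1}^2 \leq R_t^2 - 2\eta c\, \frac{\langle \nabla f(\xx_t), \xx_t - \xx_\star\rangle}{\norm{\nabla f(\xx_t)}} + \eta^2 c^2
\end{equation*}
and apply two different lower bounds on the inner product. \emph{Bound A}: co-coercivity gives $\langle \nabla f(\xx_t), \xx_t - \xx_\star\rangle \geq \norm{\nabla f(\xx_t)}^2/L$; combining with $\norm{\nabla f(\xx_t)} > c$ and $\eta \leq 1/L$ yields $R_{t+1}^2 \leq R_t^2 - \eta c^2/L$, so at most $O\bigl((R_0/(\eta c))(LR_0/c)\bigr)$ clipping steps occur before $R_t \leq c/L$. \emph{Bound B}: averaging co-coercivity and strong convexity and applying AM-GM produces $\langle \nabla f(\xx_t), \xx_t - \xx_\star\rangle \geq \sqrt{\mu/L}\,\norm{\nabla f(\xx_t)}\, R_t$, hence $R_{t+1}^2 \leq R_t^2 - 2\eta c\sqrt{\mu/L}\,R_t + \eta^2 c^2$, from which $R_{t+1} \leq R_t - \tfrac{\eta c}{2}\sqrt{\mu/L}$ whenever $R_t \geq \eta c\sqrt{L/\mu}$; iterating uses at most $O\bigl((R_0/(\eta c))\sqrt{L/\mu}\bigr)$ clipping steps to bring $R_t$ to that scale.

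The main obstacle is stitching Bounds A, B, and the no-clipping analysis together: Bound B stops delivering additive decrease once $R_t$ drops below $\eta c\sqrt{L/\mu}$, which can still exceed the threshold $c/L$ at which clipping is guaranteed to stop, so I would cover the remaining sliver by a short application of Bound A (or by a handful of unclipped contraction steps) and verify that these extra iterations are absorbed into the $\frac{1}{\mu\eta}\log(R_0^2/\epsilon)$ term. Finally, to upgrade from classical $L$-smoothness to \lzlone-smoothness, I would use the footnote's effective constant $L := \max_{t \leq T}\{L_0 + L_1 \norm{\nabla f(\xx_t)}\}$ throughout, checking that co-coercivity and the strong-convexity descent inequality remain valid on the trajectory under this local smoothness, which is exactly what the stepsize restriction $\eta \leq (L_0 + cL_1)^{-1}$ is calibrated to guarantee.
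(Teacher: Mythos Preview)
Your proposal is correct and reaches the same two-term bound, but the route differs meaningfully from the paper's proof. The paper does not work directly with the distance recursion in the clipping regime; instead it bootstraps from the convex result (Theorem~\ref{thm:det_cvx}). For the $\sqrt{L/\mu}$ branch it uses strong convexity only through $f(\xx_T)-f^\star \geq \tfrac{\mu}{2}R_T^2$, plugs in the convex rate $f(\xx_T)-f^\star \leq \cO(R_0^2/(\eta T) + LR_0^4/(\eta^2 c^2 T^2))$, and runs a halving/restart argument to halve $R_t^2$ every $\max\{\cO(1/(\mu\eta)),\cO(R_0\sqrt{L}/(\eta c\sqrt{\mu}))\}$ iterations. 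For the $LR_0/c$ branch it again uses the convex rate, combined with $\sqnorm{\nabla f(\xx_t)}\leq 2L(f(\xx_t)-f^\star)$, to find an explicit $t_0=\cO(LR_0^2/(\eta c^2))$ after which $\norm{\nabla f(\xx_t)}\leq c$ and standard linear convergence kicks in.

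Your approach is more elementary and self-contained: the per-step descent inequalities in Bounds~A and~B come directly from co-coercivity and strong convexity, without invoking the convex theorem. The AM--GM trick yielding $\langle \nabla f(\xx_t),\xx_t-\xx_\star\rangle \geq \sqrt{\mu/L}\,\norm{\nabla f(\xx_t)}\,R_t$ is a nice replacement for the halving argument. The trade-off is exactly the stitching issue you flag: the paper sidesteps it because the halving argument already handles mixed clipped/unclipped phases uniformly, whereas you must separately account for the ``sliver'' $\eta c\sqrt{L/\mu}\geq R_t\geq c/L$. Your sketch for absorbing those extra steps into the $\cO(1/(\mu\eta))$ term is sound (the total number of clipped steps in the sliver is at most $\eta L^2/\mu\leq 1/(\mu\eta)$ by Bound~A, and unclipped steps contract), so the proof goes through; it is just slightly messier than the paper's modular argument.
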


Compared to the unclipped case, there is an extra term in the strongly convex case, which does not decrease with $\epsilon$ and corresponds to the overhead of clipping. This means that during the initial phase of convergence, when the gradients are clipped, the convergence speed is sublinear, and one would have to set $c = \cO\left(\nicefrac{1}{\log(\frac{1}{\epsilon})}\right)$ in order for the clipping do not affect the convergence speed, that is much larger than in the non-convex and convex cases. 

Intuitively, since the clipped gradient norm is fixed, the iterates can actually move only up to $\eta c$ each step towards the optimum. If the initial distance to the optimum $R_0$ happened to be large, in the best case scenario, one would need at least $\frac{R_0}{\eta c}$ steps to reach the optimum. The dependency on $c$ for the first term in the ${\rm min}$ is tight.\footnote{
Formally, consider the function $x \mapsto \frac{1}{2}x^2$ and initial iterate $x_0=1$. Suppose we aim to reach a target accuracy $\epsilon < \frac{1}{4}$ with clipping threshold $c < \frac{1}{2}$. We see that unless $|x|<c$, the gradient $f'(x)=x$ will get clipped to value $c$, and hence after $\frac{1}{2c}$ iterations, cannot reach a point with norm smaller than $\frac{1}{2}$ and squared norm less than $\frac{1}{4}$ respectively.}

Note that after a constant (independent of $\epsilon$) number of iterations, the algorithm converges linearly, at a rate that depends on $(L_0 + c L_1)$ only, that can be significantly smaller than the  dependency on $L$ in the GD convergence rate.

\section{Stochastic Functions}

In the deterministic setting gradient clipping achieves convergence to the exact minimizer or a stationary point, respectively, and clipping only affects initial convergence speed. Yet, this does not hold in the stochastic setting where clipping introduces \emph{unavoidable bias}. The main reason behind this is that the expectation of the clipped stochastic gradients is different (in both norm and direction) from the clipped true gradient.

While it was known before that the clipped SGD does not converge under the bounded variance Assumption~\ref{def:var} \cite{chen20:geometric_clipping}, in this section we will \emph{precisely characterize} lower bounds on the error that clipped SGD can achieve, and then we provide upper bounds that match our lower bounds.

\subsection{Unavoidable bias introduced by clipping}\label{sec:unavoidable}

If the gradient clipping algorithm converges, it has to be towards its fixed points, i.e.\ to points $\xx^\star$ such that $ \esp{\clip(\nabla f_\xi(\xx^\star))} = 0$. This is achieved in the limit of small step-sizes (to counter stochastic noise).

We now formally show a lower bound that states that fixed points of clipped SGD are not necessary optimal or critical points of the objective function $f$. In fact, there exist stochastic gradient noise distributions under which critical points of clipped SGD are $\sigma$ far away from critical points of~$f$. 

\begin{theorem}[Small clipping radius]\label{thm:lb_small_c}
	We fix a class of functions that have variance at most $\sigma^2$ (Def~\ref{def:var}) and smoothness parameters $L_0 = 1, L_1 = 0$ (Assumption~\ref{a:smooth}). Then, for any clipping threshold $c \leq 2 \sigma$, we can find a function $f$ within this fixed class such that the fixed points of clipped-SGD exist (i.e. points $\xx^\star$  which verify $\esp{\clip(\nabla f_\xi(\xx^\star))} = 0$), and that for all such fixed-points $\xx^\star$ of clipped-SGD it holds that $\norm{\nabla f(\xx^\star)} \geq \sigma / 12$.
\end{theorem}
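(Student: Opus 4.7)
The plan is to exhibit an explicit one-dimensional counterexample. I would take $f(x) = x^2/2$ on $\R$, which is $1$-smooth, hence satisfies Assumption~\ref{a:smooth} with $L_0 = 1$ and $L_1 = 0$, and whose gradient is simply $\nabla f(x) = x$, so that $\|\nabla f(x^*)\| = |x^*|$. The stochastic gradient will be $\nabla f_\xi(x) = x + \xi$ for an asymmetric two-point random variable $\xi$ taking value $M := \sigma/\sqrt{r}$ with probability $p$ and $-m := -\sigma\sqrt{r}$ with probability $1-p$, where $r := p/(1-p) \in (0,1)$ is a free parameter to be tuned to $c$. A direct calculation gives $\E[\xi] = 0$ and $\E[\xi^2] = \sigma^2$, so the variance bound in Assumption~\ref{def:var} holds with equality.

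The fixed points of clipped SGD are the zeros of $E(x) := p\,\clip(x+M) + (1-p)\,\clip(x-m)$. Since $\clip$ is piecewise linear, $E$ is piecewise linear with at most five pieces, determined by whether each of $x+M$ and $x-m$ lies in $[-c, c]$. I would carry out a region-by-region analysis, assuming $p < 1/2$ and $M > c$, and show that $E$ is continuous and nondecreasing, equals $\pm c$ on the two unbounded outer pieces, stays constant and strictly negative at value $(2p-1)c$ on the middle piece, and climbs strictly through zero exactly once on the piece $x \in (m-c,\, m+c)$, where the positive branch is clipped and the negative branch is not. Solving $E(x^*) = pc + (1-p)(x^* - m) = 0$ on that piece yields the closed form
\begin{equation*}
x^* \;=\; \sigma\sqrt{r} - cr,
\end{equation*}
and in particular a unique fixed point.

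Then I would tune $r$ to $c$ and verify $|x^*| \ge \sigma/12$. For $c \in (\sigma/\sqrt{2},\, 2\sigma]$, pick the unconstrained maximizer of $\sigma\sqrt{r} - cr$, namely $r = \sigma^2/(4c^2) \in (0, 1/2)$, giving $x^* = \sigma^2/(4c) \ge \sigma/8$. For $c \le \sigma/\sqrt{2}$ the unconstrained optimum exits the regime $r < 1/2$ we need, so instead I would fix $r = 1/2$ (i.e.\ $p = 1/3$), yielding $x^* = \sigma/\sqrt{2} - c/2 \ge \sigma/(2\sqrt{2}) \ge \sigma/8$. In both cases one checks directly that $M > c$ and $x^* \in (m-c,\, m+c)$, so the piecewise analysis applies, and the conclusion $\|\nabla f(x^*)\| = |x^*| \ge \sigma/8 \ge \sigma/12$ follows.

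The main obstacle is the bookkeeping around $E(x)$: one must verify both that the algebraically derived fixed point lies inside the region on which the formula was valid, and that no spurious zero of $E$ appears elsewhere --- in particular, that $x=0$ is \emph{not} itself a fixed point, since that would defeat the lower bound. The constant value $(2p-1)c$ on the middle piece is precisely what forces $p$ to be strictly below $1/2$, and preserving $p < 1/2$ while keeping $|x^*|$ large is exactly what motivates the case split at $c = \sigma/\sqrt{2}$.
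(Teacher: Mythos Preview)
Your proposal is correct and follows essentially the same construction as the paper: a one-dimensional quadratic with asymmetric two-point noise, locating the fixed point on the piece where the large branch is clipped and the small one is not. The paper uses the un-centered noise $a\,\cB(p)$ with a single fixed choice $a=4\sigma$, $p(1-p)=1/16$ for all $c\le 2\sigma$, whereas you center the noise and tune $r=p/(1-p)$ to $c$ (with a case split at $c=\sigma/\sqrt{2}$); this yields the sharper constant $\sigma/8$ and, notably, your region-by-region monotonicity argument explicitly establishes uniqueness of the fixed point, which the theorem's ``for all such fixed points'' clause requires but the paper's proof leaves implicit.
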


\begin{proof}[Proof sketch.]
    We define the stochastic function 
    \begin{align*}
    f_\xi(x)  =\frac{1}{2} \begin{cases}
    (x + a)^2 & \text{w. p. }~ p \\
    x^2  & \text{w. p. }~ (1-p)
    \end{cases}
    \end{align*}
    where $a > 0$ and $p < 1/2$. The expected function is thus $f(x) = \frac{1}{2} [p (x + a)^2 +  (1 - p) x^2 ]$. 
    
   The result is then obtained by choosing $a = 4\sigma$ and $p = (2 - \sqrt{3})/4 <  1/4$ is such that $p(1 - p) = 1/16$. The statement follows by using the standard algebra, as detailed in Appendix~\ref{app:lower_bound}. 
\end{proof}

The previous impossibility result holds when the clipping radius is small ($c < 2 \sigma$). We further show that by taking a larger clipping radius $c$, we can reduce the neighborhood size to which clipped-SGD converges from $\sigma$ to $\nicefrac{\sigma^2}{c}$, but cannot completely eliminate it.

\begin{theorem}[Large clipping radius] \label{thm:lb_large_c}
	We fix a class of functions that have variance at most $\sigma^2$ (Def~\ref{def:var}) and smoothness parameters $L_0 = 1, L_1 = 0$ (Assumption~\ref{a:smooth}). Then, for any clipping threshold $c \leq 2 \sigma$, we can find a function $f$ within this fixed class such that the fixed points $\xx^\star$ of clipped-SGD exist ($\esp{\clip(\nabla f_\xi(\xx^\star))} = 0$) and $\norm{\nabla f(\xx^\star)} \geq \nicefrac{\sigma^2}{6c}$.
\end{theorem}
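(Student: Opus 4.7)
The plan is to reuse the parametric family of one-dimensional random functions from the proof of Theorem~\ref{thm:lb_small_c}, namely
\begin{equation*}
f_\xi(x) = \tfrac{1}{2}(x + a\xi)^2, \qquad \xi \in \{0,1\}, \qquad \Pr[\xi = 1] = p,
\end{equation*}
which gives $\nabla f(x) = x + pa$, variance $p(1-p)a^2$, and $L_0 = 1$, $L_1 = 0$. Only the tuning of $(a,p)$ will change relative to the small-$c$ regime: instead of a constant $p$ and $a = \Theta(\sigma)$, I will take $p$ small ($p \ll 1/2$) and $a$ of order $c$, so that the rare event $\{\xi = 1\}$ produces a gradient large enough to be clipped, while the common event does not.

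Concretely, I would guess a fixed point $x^\star < 0$ at which the large gradient $x^\star + a$ lies above $c$ (hence is clipped to $c$) but the common gradient $x^\star$ satisfies $|x^\star| \leq c$ (hence is unclipped). Under this assumed clipping pattern the fixed-point equation $\esp{\clip(\nabla f_\xi(x^\star))} = 0$ becomes $p\cdot c + (1-p)\cdot x^\star = 0$, so
\begin{equation*}
x^\star = -\frac{pc}{1-p},\qquad \nabla f(x^\star) = x^\star + pa = \frac{p\bigl(a(1-p) - c\bigr)}{1-p}.
\end{equation*}
The remaining task is a one-line optimization: choose $(a, p)$ subject to the variance constraint $p(1-p)a^2 \leq \sigma^2$ so as to lower-bound $|\nabla f(x^\star)|$ by $\sigma^2/(6c)$. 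A clean choice is $a = 2c$ and $p = \sigma^2/(4c^2)$, which gives $p(1-p)a^2 = \sigma^2(1-p) \leq \sigma^2$. In the regime $c \geq \sigma$ (so $p \leq 1/4$), elementary algebra yields $a(1-p) - c = c(1-2p)/(1-p) \geq 2c/3$, whence
\begin{equation*}
|\nabla f(x^\star)| \;=\; \frac{p(a(1-p)-c)}{1-p} \;\geq\; \frac{p\cdot 2c/3}{1} \;=\; \frac{\sigma^2}{6c}.
\end{equation*}

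Before combining these computations I would verify the self-consistency of the assumed clipping regime at the candidate $x^\star$. On the one hand, $x^\star + a = c + (a - c/(1-p)) > c$ precisely when $a(1-p) > c$, which holds for our choice $a = 2c$ and $p \leq 1/4$. On the other hand, $|x^\star| = pc/(1-p) \leq c$ holds iff $p \leq 1/2$, which is automatic. Thus the two events behave as assumed, and $x^\star$ is a genuine fixed point of clipped SGD for this $f$, while $|\nabla f(x^\star)| \geq \sigma^2/(6c)$.

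The main obstacle I expect is not technical but conceptual: one must pick the right clipping pattern (which of the two gradients is clipped, and with which sign of $x^\star$) before writing the fixed-point equation, since different patterns give different constructions. Once the ``clip the rare large gradient, keep the common small gradient'' pattern is identified, everything else reduces to a constrained one-variable maximization. A minor bookkeeping point is reconciling the regime of $c$ to which this bound applies with the small-$c$ regime of Theorem~\ref{thm:lb_small_c}: the two bounds $\sigma/12$ and $\sigma^2/(6c)$ coincide at $c = 2\sigma$, so together they yield a uniform lower bound of order $\min\{\sigma, \sigma^2/c\}$ over all $c$, matching the qualitative picture advertised in the introduction.
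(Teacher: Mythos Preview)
Your approach is essentially identical to the paper's: same one-dimensional Bernoulli family, same fixed point $x^\star=-pc/(1-p)$, same choice $a=2c$, and the paper takes $p(1-p)=\sigma^2/a^2$ where you take the (equally valid) $p=\sigma^2/a^2$, both yielding the bound $\sigma^2/(6c)$ in the regime $c\geq\sigma$. One small slip to fix: $a(1-p)-c = c(1-2p)$, not $c(1-2p)/(1-p)$; what is actually $\geq 2c/3$ is the quotient $\tfrac{a(1-p)-c}{1-p}=\tfrac{c(1-2p)}{1-p}$, after which $|\nabla f(x^\star)| = p\cdot\tfrac{c(1-2p)}{1-p}\geq p\cdot\tfrac{2c}{3}=\tfrac{\sigma^2}{6c}$ goes through as you intend.
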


\begin{proof}[Proof sketch.]
    We use the same function as Theorem~\ref{thm:lb_small_c}, this time with $a = 2c$ and $p(1 - p) = \sigma^2 / a^2$.
\end{proof}

Our lower bounds in Theorems~\ref{thm:lb_small_c} and \ref{thm:lb_large_c} mean that with only assuming Def.~\ref{def:var} and Assumption~\ref{a:smooth}, there cannot exist problem-dependent values for $c$ that would give exact convergence for any function. We note that the fact that clipping might introduce a bias when the noise is only bounded in expectation is not new~\citep{chen20:geometric_clipping}. The interesting thing about Theorems~\ref{thm:lb_small_c} and \ref{thm:lb_large_c} is that they are matched by the upper bound in Theorem~\ref{thm:csgd_large_c}, meaning that we \emph{precisely capture} the strength of the bias introduced in this case.

\paragraph{Uniformly bounded noise.} Note that this lower bound crucially relies on the noise being bounded by $\sigma$ \emph{in expectation} (Assumption~\ref{def:var}). Indeed, the results hinge on the fact that stochastic gradients are clipped with probability $p$, thus introducing a bias. If we keep this Bernoulli noise constant (and therefore will ensure uniformly bounded noise Assumption~\ref{def:unif_var}) and increase $c$, then this bias would completely disappear for $c \approx a$, because then the clipping radius would be larger than the uniform bound on variance. %

\subsection{Convergence results}

\begin{figure*}[htb]
	\centering     %
	\subfigure[Constant stepsize, target $\epsilon = 10^{-2}$]{\label{fig:det1}\includegraphics[width=0.32\linewidth]{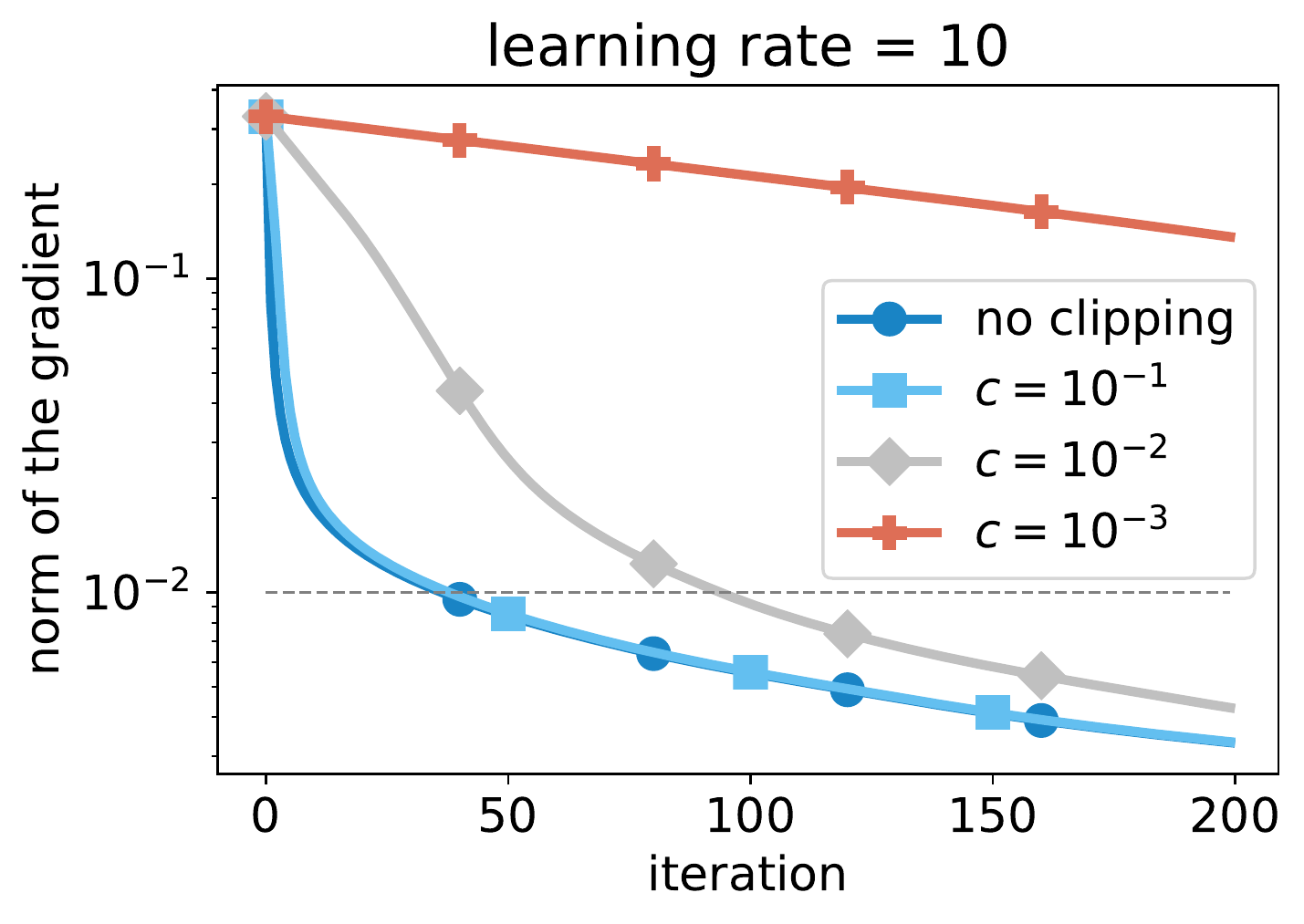}}%
	\subfigure[Constant stepsize, target $\epsilon = 10^{-3}$]{\label{fig:det2}\hfill \includegraphics[width=0.32\linewidth]{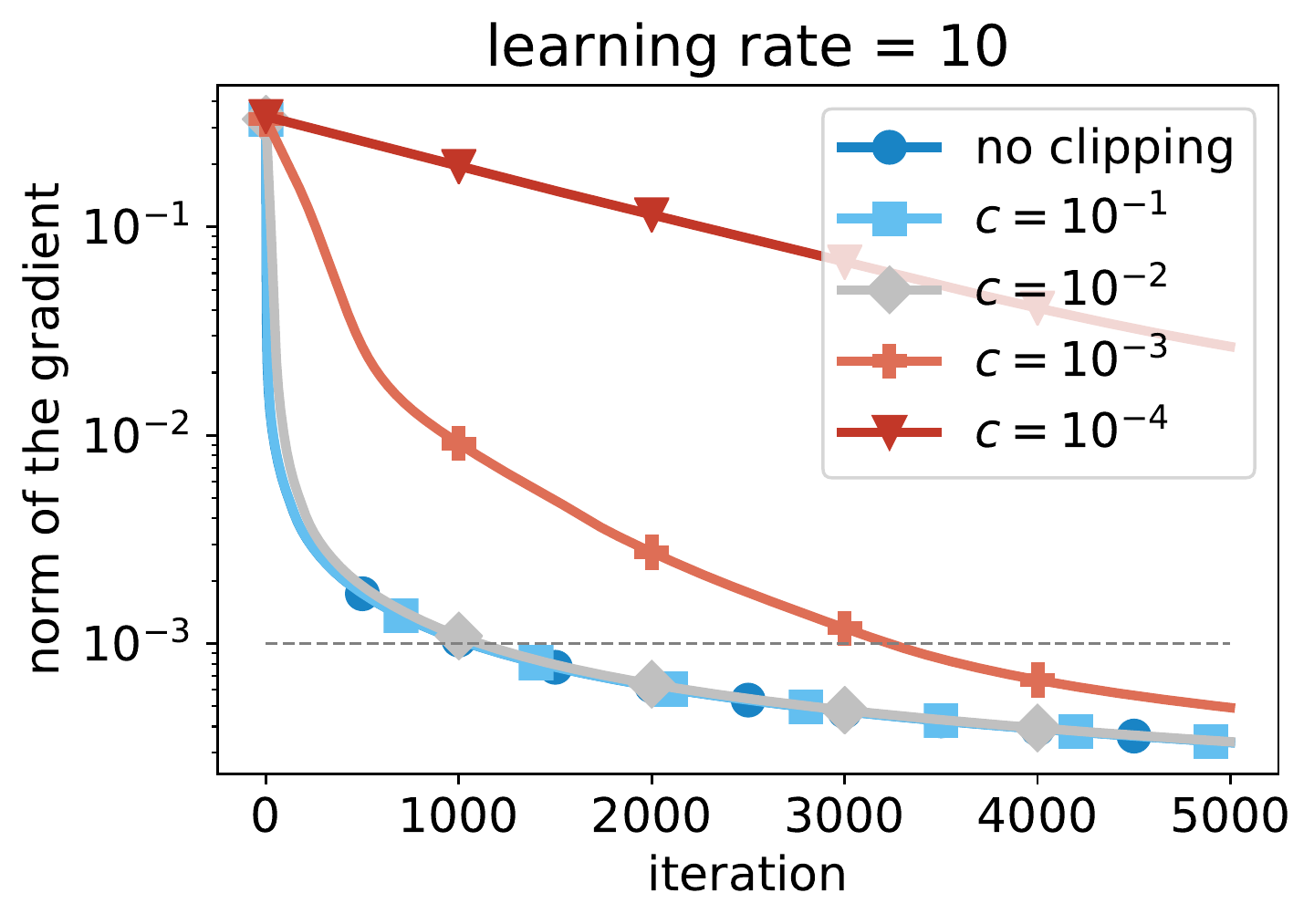}}%
	\subfigure[Tuned stepsize for target $\epsilon = 10^{-2}$ ]{\label{fig:det3}\hfill \includegraphics[width=0.32\linewidth]{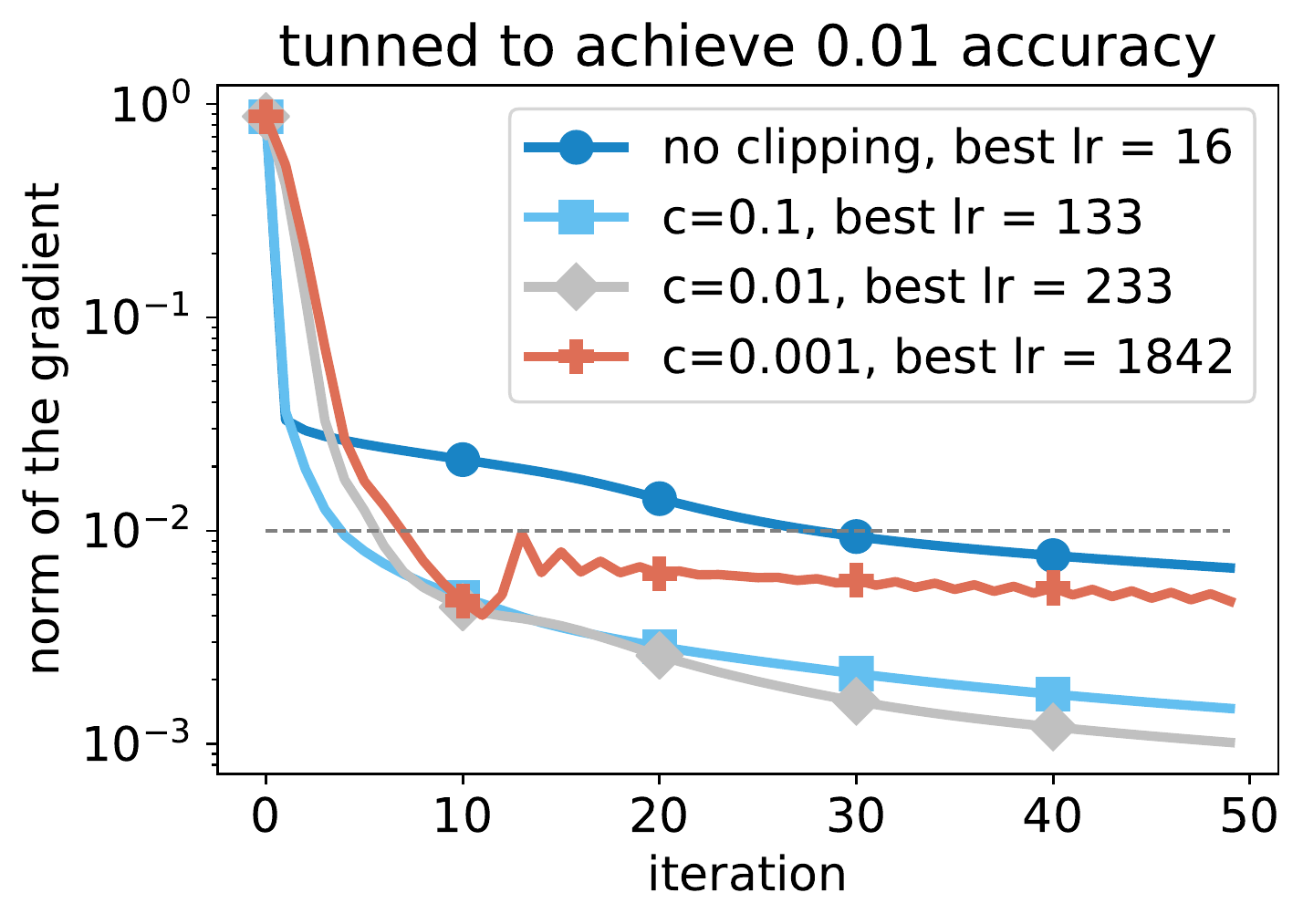}}
	\caption{Deterministic clipped gradient descent on the $\operatorname{w1a}$ dataset. We investigate the dependence of the convergence rate on the clipping parameter $c$. In Figures (a) and (b) we see that as soon as the clipping threshold is smaller or equal to the target gradient norm $\epsilon$, the convergence speed is affected only by a constant. In Figure (c), we see that as the clipping threshold $c$ decreases, the best tuned stepsize (tuned to reach $\epsilon = 10^{-2}$ fastest) decreases. These observations in accordance to the theory in Theorem~\ref{thm:det_cvx}. %
	}\label{fig:det}
\end{figure*}
We now introduce the central result of this paper: the convergence of clipped SGD that match the lower bounds above.%

\begin{theorem}\label{thm:csgd_large_c}
    If $f$ is \lzlone-smooth (but not necessarily convex) and we run clipped SGD for $T$ steps with step-size $\eta \leq 1 / [9(L_0 + c L_1)]$, then $\min_{t \in [0, T]} \E \norm{\nabla f(\xx_{t})}$ is upper bounded by
    \begin{align*}
        \cO\left(\min\Big\{\sigma,\frac{ \sigma^2}{c}\Big\}  + \sqrt{\eta (L_0 + c L_1)} \sigma + \sqrt{\frac{F_0}{\eta T}} + \frac{F_0}{\eta T c} \right)\,,
    \end{align*}
    where $F_0 = f(\xx_0) - f^\star$.
\end{theorem}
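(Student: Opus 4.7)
The plan is to execute a descent-lemma argument tailored to clipped SGD under \lzlone-smoothness. Each step will produce a decrease of $f$ of the form $-\Theta\bigl(\eta\norm{\nabla f(\xx_t)}\min\{\norm{\nabla f(\xx_t)},c\}\bigr)$, corrupted by a variance term of order $\eta^2(L_0+cL_1)\sigma^2$ and a bias term of order $\eta\norm{\nabla f(\xx_t)}\cdot\min\{\sigma,\sigma^2/c\}$. Because each update satisfies $\norm{\xx_{t+1}-\xx_t}=\eta\norm{\vg_t}\leq\eta c\leq 1/(9L_1)<1/L_1$, Assumption~\ref{a:smooth} applies between $\xx_t$ and $\xx_{t+1}$, giving
\[
f(\xx_{t+1})\leq f(\xx_t)-\eta\langle\nabla f(\xx_t),\vg_t\rangle+\tfrac12\eta^2\bigl(L_0+L_1\norm{\nabla f(\xx_t)}\bigr)\norm{\vg_t}^2.
\]
Taking conditional expectation, I bound $\E_t\norm{\vg_t}^2\leq 2\min\{\norm{\nabla f(\xx_t)}^2,c^2\}+2\sigma^2$ via the $1$-Lipschitzness of $\clip$. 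The stepsize bound $\eta(L_0+cL_1)\leq 1/9$ then lets a short case check on whether $\norm{\nabla f(\xx_t)}$ is at most or larger than $c$ absorb each quadratic cross-term into a small fraction of the progress, leaving a residual variance contribution of $\eta^2(L_0+cL_1)\sigma^2$ per step.

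The technical heart is the progress/bias lemma $\E_t\langle\nabla f(\xx_t),\vg_t\rangle\geq c_0\norm{\nabla f(\xx_t)}\min\{\norm{\nabla f(\xx_t)},c\}-\mathrm{bias}$ with $\mathrm{bias}\lesssim\norm{\nabla f(\xx_t)}\cdot\min\{\sigma,\sigma^2/c\}$. I would split on $\norm{\nabla f(\xx_t)}$ versus $c$. If $\norm{\nabla f(\xx_t)}\leq c/2$, clipping occurs only on $\{\norm{\nabla f_\xi-\nabla f(\xx_t)}>c/2\}$, of probability at most $4\sigma^2/c^2$ by Chebyshev; Cauchy-Schwarz on $\E_t[(\norm{\nabla f_\xi}-c)_+]$ then bounds $\norm{\E_t\vg_t-\nabla f(\xx_t)}\leq\min\{\sigma,2\sigma^2/c\}$, so $\langle\nabla f,\E_t\vg_t\rangle\geq\norm{\nabla f}^2-\norm{\nabla f}\cdot\mathrm{bias}$. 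If $\norm{\nabla f(\xx_t)}>c/2$, I condition on the bulk event $\{\norm{\nabla f_\xi-\nabla f(\xx_t)}\leq\norm{\nabla f(\xx_t)}/2\}$, of probability at least $1-4\sigma^2/\norm{\nabla f(\xx_t)}^2$ by Chebyshev; there $\norm{\nabla f_\xi}\in[\norm{\nabla f}/2,3\norm{\nabla f}/2]$ and $\langle\nabla f,\nabla f_\xi\rangle\geq\norm{\nabla f}^2/2$, so a short sub-case on whether $\norm{\nabla f_\xi}\leq c$ yields $\langle\nabla f,\clip(\nabla f_\xi)\rangle\geq\tfrac13\norm{\nabla f}\min\{\norm{\nabla f},c\}$, while the complement contributes at most $4c\sigma^2/\norm{\nabla f(\xx_t)}$ in absolute value, absorbable into the bias budget.

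Combining the two lemmas and using Young's inequality to split $\eta\norm{\nabla f(\xx_t)}\cdot\mathrm{bias}$ into a small fraction of the progress plus a residual of order $\eta\min\{\sigma,\sigma^2/c\}^2$, the per-step descent becomes
\[
\E_t f(\xx_{t+1})-f(\xx_t)\leq -c_1\eta\norm{\nabla f(\xx_t)}\min\{\norm{\nabla f(\xx_t)},c\}+O\bigl(\eta^2(L_0+cL_1)\sigma^2+\eta\min\{\sigma,\sigma^2/c\}^2\bigr).
\]
Summing from $t=0$ to $T-1$, telescoping via $\E f(\xx_T)\geq f^\star$, and dividing by $\eta T$ gives $\tfrac{1}{T}\sum_t\E[\norm{\nabla f(\xx_t)}\min\{\norm{\nabla f(\xx_t)},c\}]\leq X$ with $X=O(F_0/(\eta T)+\eta(L_0+cL_1)\sigma^2+\min\{\sigma,\sigma^2/c\}^2)$. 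The elementary implication $a\min\{a,c\}\leq X\Rightarrow a\leq\sqrt X+X/c$, applied at the minimizing index $t$, then yields $\min_{t\in[0,T]}\E\norm{\nabla f(\xx_t)}=O(\sqrt X+X/c)$; on expansion this recovers precisely the four terms $\sqrt{F_0/(\eta T)}+F_0/(\eta Tc)+\sqrt{\eta(L_0+cL_1)}\sigma+\min\{\sigma,\sigma^2/c\}$, with lower-order cross-terms subsumed.

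The delicate step is the progress/bias lemma: producing the \emph{tight} floor $\min\{\sigma,\sigma^2/c\}$ (matching the lower bounds of Theorems~\ref{thm:lb_small_c} and \ref{thm:lb_large_c}) uniformly in $\norm{\nabla f(\xx_t)}/c$ requires handling two complementary tail regimes and a careful treatment of the transition region $\norm{\nabla f(\xx_t)}\approx c$, where the naive Cauchy-Schwarz estimate on $\E_t[(\norm{\nabla f_\xi}-c)_+]$ degrades; the remainder is a standard descent-lemma pipeline adapted to the clipped, \lzlone-smooth setting.
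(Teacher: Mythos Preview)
Your proposal has the right shape and shares the paper's key ingredient in the small-gradient regime (the Chebyshev bound on the clipping probability plus Cauchy--Schwarz for the bias when $\norm{\nabla f(\xx_t)}\leq c/2$ is essentially the paper's ``Case 3''). But the large-gradient branch $\norm{\nabla f(\xx_t)}>c/2$ has two coupled gaps.

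First, your conditioning on $\{\norm{\nabla f_\xi-\nabla f}\leq\norm{\nabla f}/2\}$ yields an expected inner product of the form $(1-p)\cdot\tfrac13\norm{\nabla f}\min\{\norm{\nabla f},c\}-p\,c\norm{\nabla f}$ with $p\leq 4\sigma^2/\norm{\nabla f}^2$; the total loss relative to the clean progress is $O(\sigma^2)$, not $O\bigl(\norm{\nabla f}\cdot\min\{\sigma,\sigma^2/c\}\bigr)$ as you claim. The two quantities agree (up to constants) only when $c\gtrsim\sigma$; for $c<\sigma$ and $c/2<\norm{\nabla f}\lesssim\sigma$ your stated bias bound fails, and moreover the prefactor $(1-4\sigma^2/\norm{\nabla f}^2)$ on the bulk progress is vacuous whenever $\norm{\nabla f}\leq 2\sigma$. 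Second, even granting bias $\lesssim\norm{\nabla f}\cdot B$ with $B=\min\{\sigma,\sigma^2/c\}$, your Young split $\norm{\nabla f}B\leq\epsilon\norm{\nabla f}^2+B^2/(4\epsilon)$ produces $\epsilon\norm{\nabla f}^2$, which exceeds the available progress $c\norm{\nabla f}$ whenever $\norm{\nabla f}>c$; no uniform $\epsilon$ absorbs it, so the per-step inequality you write down does not follow.

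Both issues are repairable, but the cleanest fix is exactly what the paper does and what your unified route tries to avoid: a top-level split on $c\lessgtr 4\sigma$. In the large-$c$ branch the paper handles $\norm{\nabla f}>c/2$ not by conditioning but via the polarization identity $-\nabla f^\top\gg=-\tfrac{\alpha}{2}\norm{\nabla f}^2-\tfrac{1}{2\alpha}\norm{\gg}^2+\tfrac{1}{2\alpha}\norm{\gg-\alpha\nabla f}^2$ with $\alpha\nabla f=\clip(\nabla f)$, together with the $1$-Lipschitzness of $\clip$; the error term is then $\sigma^2/(2\alpha)\leq(\sigma^2/c^2)\cdot\tfrac{c}{2}\norm{\nabla f}$, and $\sigma^2/c^2\leq 1/16$ absorbs it directly into the progress with \emph{no} residual. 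In the small-$c$ branch the paper assumes $\norm{\nabla f}\geq 6\sigma$ (otherwise the $O(\sigma)$ floor is already attained) and conditions on $\{\norm{\nabla f_\xi-\nabla f}\leq 3\sigma\}$, which always has probability $\geq 8/9$. Your single conditioning threshold $\norm{\nabla f}/2$ cannot serve both purposes simultaneously.
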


The convergence rate contains four terms: the first term does not decrease with neither the stepsize $\eta$ nor the number of iterations $T$ and it is due to unavoidable bias, as explained in the previous section. Due to Theorems~\ref{thm:lb_small_c},~\ref{thm:lb_large_c} this term is tight and cannot be improved. The second term is the stochastic noise term that decreases with the stepsize, and the last two terms are the optimization terms that describe how clipping affects convergence when the stochastic noise is zero ($\sigma = 0$), matching the convergence in Theorem~\ref{thm:det_nc}. Note that we precisely quantify the bias of clipped SGD under the general bounded variance assumption.

\paragraph{Comparison to the unclipped SGD.}
Under the standard smoothness Assumption~\ref{a:smooth-classic}, unclipped SGD requires the stepsize to be smaller than $\eta \leq L^{-1}$ and it converges at the rate \cite{Bottou18:sgd}
\begin{align*}
E_T \leq \cO\left(\sqrt{\eta L} \sigma + \sqrt{\frac{F_0}{\eta T}}\right) \,.
\end{align*}
where\footnote{Note that $E_T \geq \min_{t \in [0, T]} \E \norm{\nabla f(\xx_{t})}$.} $E_T := \left(\frac{1}{T} \sum_{t = 0}^T \| \nabla f(\xx_t)\|^2\right)^{\frac{1}{2}}$. 
In comparison to the unclipped SGD, clipped SGD \eqref{eq:clipped_sgd},
\begin{itemize}[leftmargin=12pt,nosep]
	\item Has an unavoidable bias term $\min\Big\{\sigma,\frac{ \sigma^2}{c}\Big\}$ that we discussed in detail in the previous Section~\ref{sec:unavoidable}.
	\item Has a smaller stochastic noise term, assuming that\footnote{We can always choose $L_0 = L$ and $L_1 = 0$ to satisfy this equation. Frequently, both $L_0$ and $L_1$ are much smaller than $L$ (see discussion in Section~\ref{sec:def}) } $L_0 + c L_1 \leq L$ .
	\item Similarly to the deterministic case (Thm.~\ref{thm:det_nc}), has an additional higher-order term $\nicefrac{F_0}{\eta T c}$. 
\end{itemize}

We want to highlight that the bias term $\min\Big\{\sigma,\frac{ \sigma^2}{c}\Big\} $ in our convergence rate is tight. This implies in particular that under general expected bounded noise Assumption~\ref{def:var}, clipped SGD cannot converge to the exact critical points of $f$, but the convergence neighborhood size decreases with increasing $c$. 

Similarly to \citet{zhang19:clippingL0-L1}, clipped SGD improves over unclipped SGD the dependence on the smoothness parameter from $L$ to $L_0$. In contrast to \citet{zhang19:clippingL0-L1} in our work we do not assume specific values of $c$, and use a weaker expected bounded noise Assumption~\ref{def:var}. %

The complete proof of Theorem~\ref{thm:det_scvx} can be found in Appendix~\ref{app:stoch_proofs}. We now give an intuitive proof sketch of Theorem~\ref{thm:csgd_large_c}. 
\vspace{-3mm}
\begin{proof}[Proof sketch (Theorem~\ref{thm:csgd_large_c})]
    The proof is split into two different cases, depending on how big $c$ is compared to $\sigma$.

    \textbf{Case $c \leq 4 \sigma$.} In this case, according to the lower bound in Theorem~\ref{thm:lb_small_c}, we can only show convergence of the gradient norm up to $\Theta(\sigma)$. To achieve this, we only need to consider the case when the gradients have $\norm{\nabla f(\xx_t)} \geq 6\sigma$, since otherwise the convergence to $\Theta(\sigma)$ is already achieved. We can show that in the case of large gradients (i.e.\ $\norm{\nabla f(\xx_t)} \geq 6\sigma$), the standard convergence results hold under uniformly bounded noise, because the gradient norm is large enough to compensate the (fixed) variance.

    \textbf{Case $c \geq 4\sigma$.} In this case, we analyze clipped SGD as some form of biased gradient descent. Note that under Uniform Boundedness (Assumption~\ref{def:unif_var}), the bias eventually vanishes for such large clipping thresholds. We precisely quantify the remaining bias term $B_t$ instead. After some manipulations, we obtain descent terms such as Equation~\eqref{eq:main_descent_bias} from Appendix~\ref{app:stoch_proofs}, and a bias term that writes as
    \begin{equation}
        B_t = \sqnorm{\esp{\clip(\nabla f_\xi(\xx_t))} - \clip(\nabla f(\xx_t))}.
    \end{equation}
    Using that the clipping operation is a projection on a convex set (on a ball of a radius $c$), then we can bound this term directly as $B_t \leq \sigma^2$. In particular, we can cancel it with descent terms when $\norm{\nabla f(\xx_t)}$ is large enough. 
    
    When $\norm{\nabla f(\xx)} \leq c/2$, then we can be more precise. In particular we can show that the probability that the stochastic gradient is clipped is smaller than $\sigma^2 / c^2$. Using this, we can refine the estimate of the bias as
    \begin{equation}
        B_t \leq 8 \frac{\sigma^4}{c^2} + 32\frac{\sigma^4}{c^4} \sqnorm{\nabla f(\xx)}.
    \end{equation} 
    The $\sigma^4 / c^2$ is the bias term that we find in the convergence rate, and the $\sqnorm{\nabla f(\xx)}$ term can be canceled with descent terms (that are also proportional to this) provided $\sigma^4 / c^4$ is small enough.
\end{proof}

\paragraph{Comparison to the prior works.} 
All of the prior works used stronger assumptions allowing for simplifications in their analysis, and allowing to mitigate the bias introduced by the clipping. 

For example, \cite{zhang19:clippingL0-L1}, \cite{zhang2020improved}, \cite{yang22:normalized_and_clipped_for_dp} considered large clipping thresholds ($c \geq \sigma$) and a stronger assumption of uniform boundness (Assumption~\ref{def:unif_var}), ensuring that the bias vanishes as we approach the optimum. 
The other prior work of \cite{gorbunov20:acc_clipping} used a specific clipping threshold $c$ and a specific large enough batch size allowing also to consider only one of the cases ($c \geq 4 \sigma$). They require the batch sizes to scale linearly with the number of iterations $T$, thus mitigating stochasticity in their gradients.
\cite{qian21:understanding_clipping} and \cite{chen20:geometric_clipping} impose some symmetry assumptions on the distribution of the stochastic gradients, which allows them to mitigate the bias introduced by the clipping operator. In the limit case of entirely symmetric distribution, the clipping operator does not change the direction of expected gradient at any point, thus allowing for the similar convergence analysis as with deterministic gradients.

\begin{figure*}[tb]
	\centering     %
	\subfigure[Quadratic function with $\chi^2$ stochastic noise]{
	\includegraphics[width=0.24\linewidth]{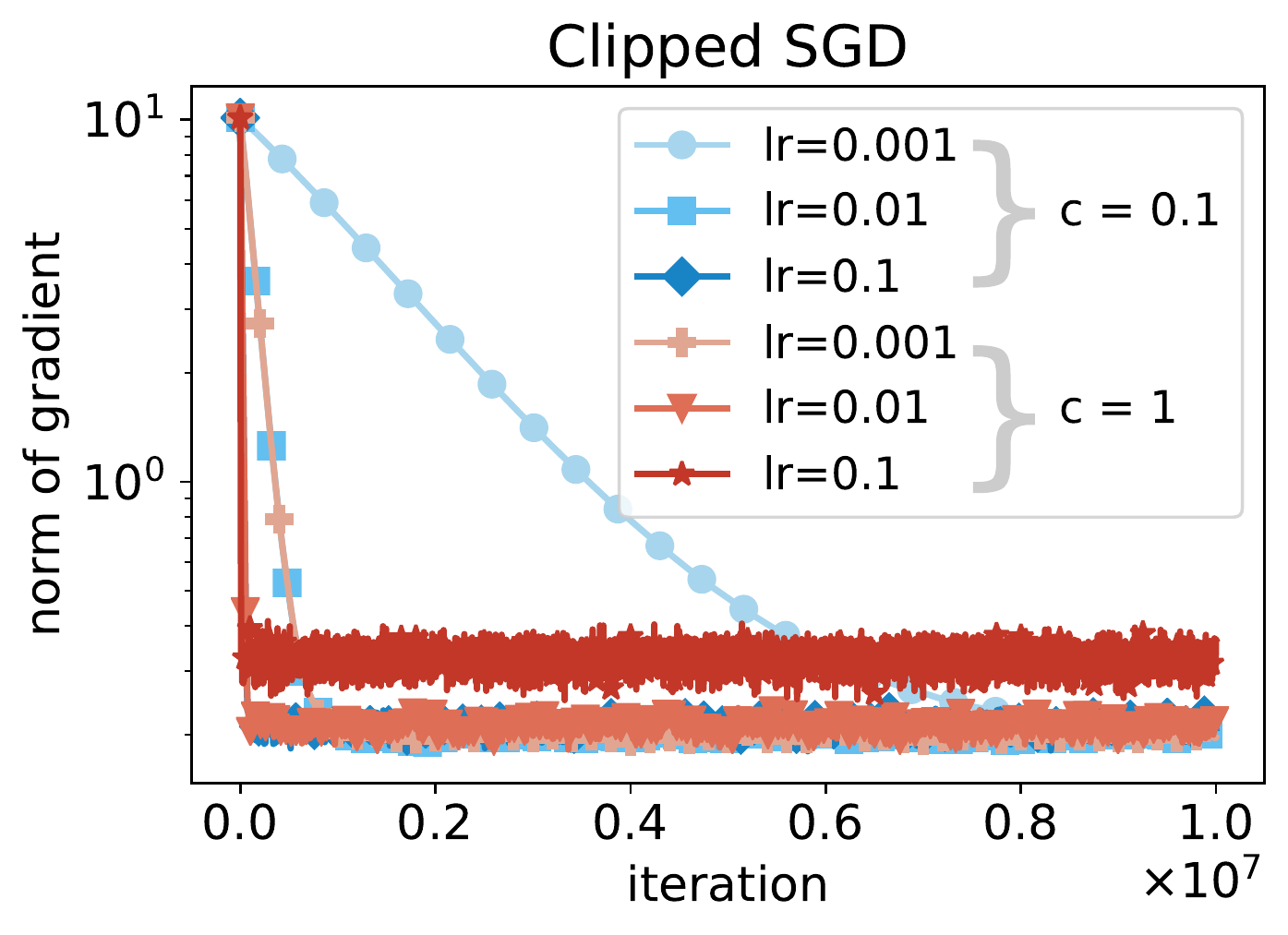}
	\includegraphics[width=0.24\linewidth]{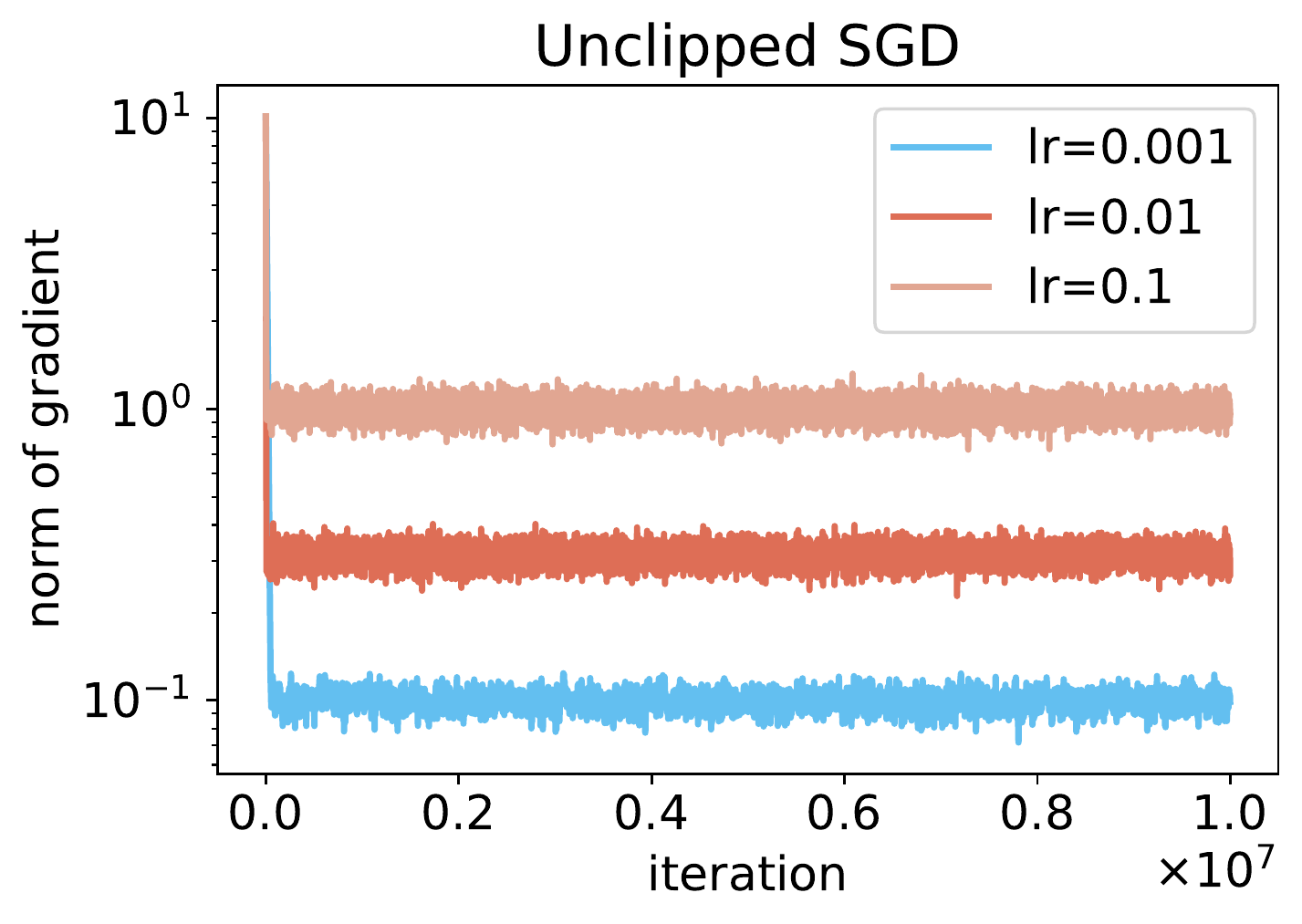}
}
	\subfigure[Logistic regression on $\operatorname{w1a}$ dataset (batch sitze = 1).]{
	\includegraphics[width=0.24\linewidth]{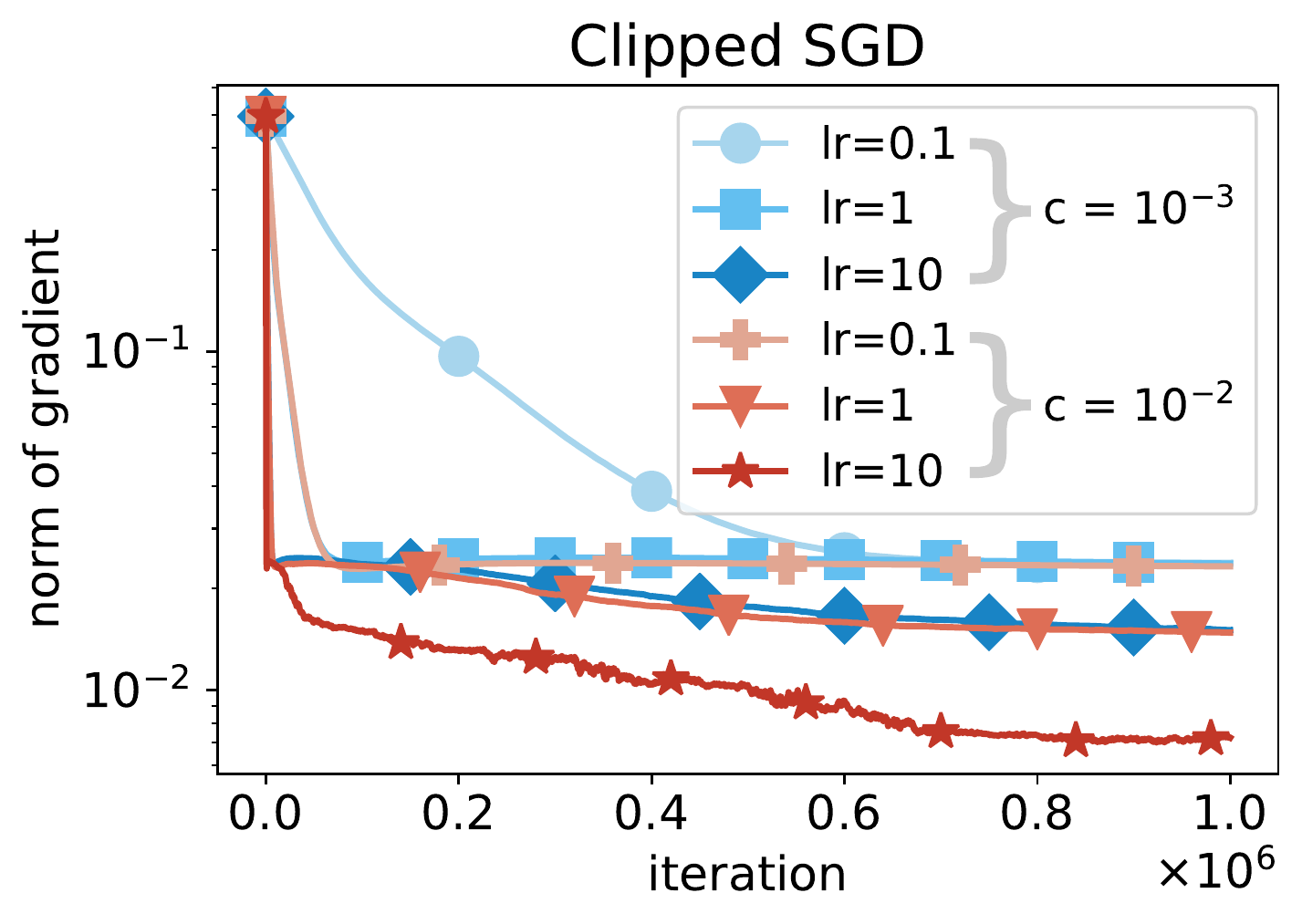}
	\includegraphics[width=0.24\linewidth]{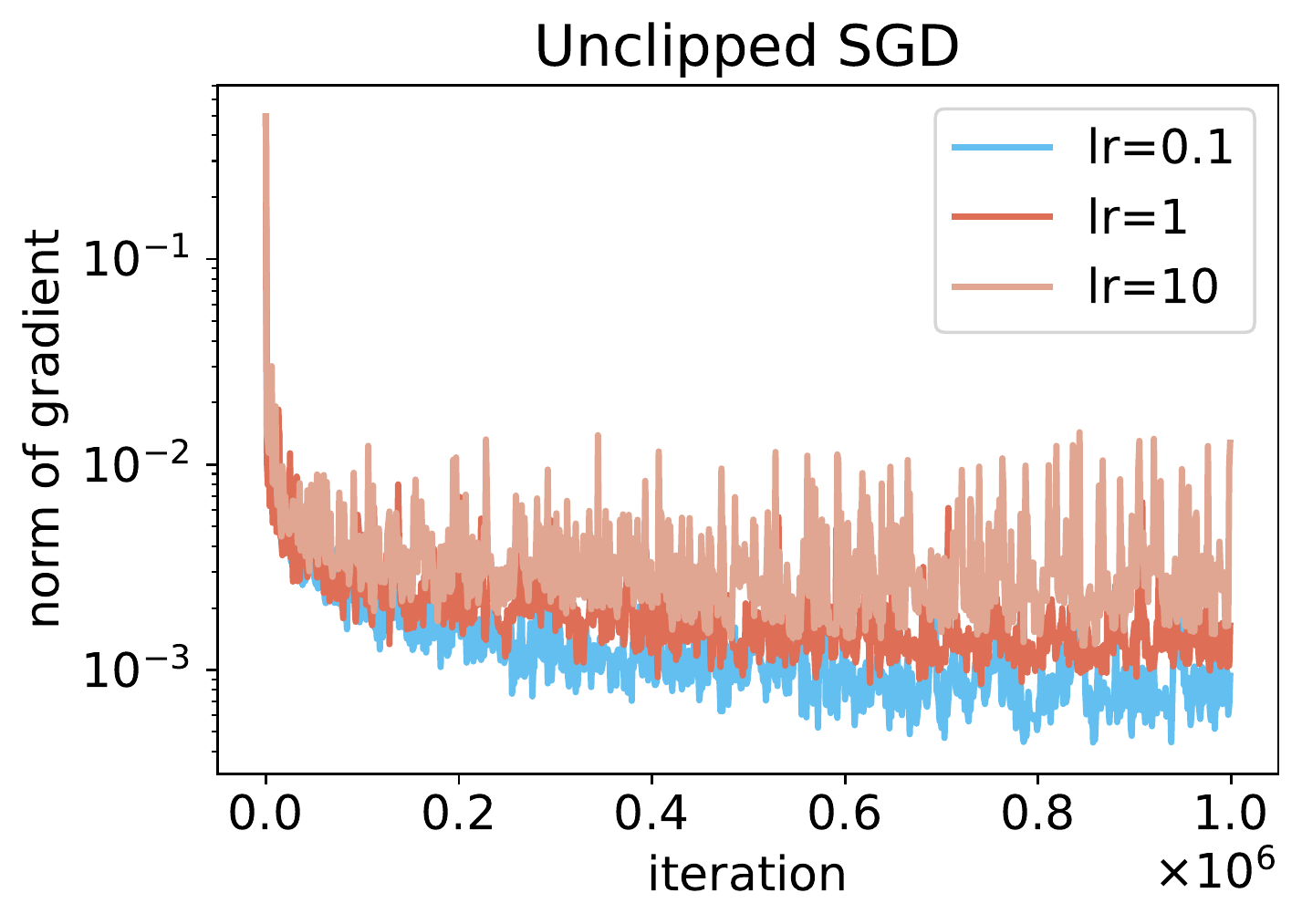}
}
	
	\caption{Stochastic gradient descent on a quadratic function with $\chi^2$ stochastic noise (left), and $\operatorname{w1a}$ dataset (right). Without clipping, decreasing the stepsize allows to achieve the smaller gradient norm. However, decreasing the stepsize with clipping does not allow to achieve better performance. This is because of the unavoidable bias term in Theorem~\ref{thm:csgd_large_c}. }\label{fig:stoch}
\end{figure*}

\subsection{Extension to differentially private SGD}

In differentially private SGD~\cite{abadi16:dp-sgd} every individual stochastic gradient in the batch is getting clipped individually before averaging the gradients over the batch, i.e.\  the algorithm is
\begin{align}\label{eq:dp-sgd}
\xx_{t+1} = \xx_t - \eta \left( \frac{1}{B} \sum_{i \in \cB_t} \clip(\nabla f_{\xi_i}(\xx_t)) + \zz_t\right),
\end{align}
where $\zz_t \sim \cN \left(0, \frac{\sigma_{\text{DP}}^2}{d} \mI \right)$ is the additional noise due to differential privacy. %

As detailed in Appendix~\ref{app:privacy} we can extend our analysis to this algorithm in a straightforward way and show that $T$ iterations of \eqref{eq:dp-sgd} allow to obtain gradient norm smaller than: 
\begin{align*}
\cO\bigg(\frac{L \eta }{c} \sigma_{\text{DP}}^2 + \sqrt{L \eta \sigma_{\text{DP}}}  &+ \min\bigg(\sigma^2,\frac{\sigma^4}{c^2}\bigg)  \\
&+ \eta L \frac{\sigma^2}{B}  + \frac{F_0}{\eta T} + \frac{F_0^2}{\eta^2 T^2 c^2}\bigg).
\end{align*}
where $B$ is the mini-batch size, and $L = L_0 + \max_t \norm{\nabla f(\xx_t)} L_1$, corresponding to the smoothness constant according to the standard $L$-smoothness assumption. 

Similarly to the clipped-SGD algorithm considered previously, DP-SGD also suffers from a bias term $\min\left(\sigma^2,\nicefrac{\sigma^4}{c^2}\right)$. Lower bounds in Theorems~\ref{thm:lb_small_c}, \ref{thm:lb_large_c} apply to DP-SGD, so this bias is also tight and unavoidable. 

In comparison to the clipped SGD \eqref{eq:clipped_sgd}, DP-SGD has additional terms related to the injected privacy noise $\sigma_{\text{DP}}$, and the stochastic noise (fourth term) is reduced by a factor $B$ due to mini-batching.

In order to have the formal privacy guarantees, one has to set the variance of additional DP noise appropriately, \citet{abadi16:dp-sgd} prove that for $\sigma_{\text{DP}} \geq \Omega \left( c d \frac{\sqrt{T\log \frac{1}{\delta}}}{\epsilon} \right)$ DP-SGD is $(\epsilon, \delta)$-differentially private.  

Related to prior work on DP-SGD that incorporate clipping in the convergence analysis \cite{chen20:geometric_clipping,yang22:normalized_and_clipped_for_dp}, our convergence rates are proven  only assuming bounded variance in expectation (Def.~\ref{def:var}) and without extra assumptions on the noise. They showcase the effect of the clipping threshold on the convergence of DP-SGD.

\section{Experiments}
In this section, we investigate the performance of gradient clipping on logistic regression on the \texttt{w1a} dataset \citep{w1a}, and on the artificial quadratic function $f(\xx) = \E_{\xi \sim \chi^2(1)}\left[f(\xx, \xi) := \frac{L}{2} \norm{\xx}^2 + \langle \xx, \xi\rangle\right]$, where $\xx\in\R^{100}$, we choose $L = 0.1$, and $\chi^2(1)$ is a (coordinate-wise) chi-squared distribution with $1$ degree of freedom. The goal is to highlight our theoretical results. 

\paragraph{Deterministic setting.} In the deterministic setting, one of our insights was that clipping does not degrade performance too much as long as the clipping threshold is bigger than the final target accuracy. We test this in Figures~\ref{fig:det1}, \ref{fig:det2} for logistic regression on \texttt{w1a} dataset, by plotting the clipped-GD with different values of $c$, for different target accuracies. We see that to reach accuracy $10^{-3}$, all values of $c$ (except from $c=10^{-4}$) perform relatively well. However, choosing $c=10^{-3}$ is not advisable if we only want to reach an error $\epsilon = 10^{-2}$, as can be seen in Figure~\ref{fig:det2} (note the different scaling of the $x$-axis in both plots).

In Figure~\ref{fig:det3}, we investigate the dependence between the clipping threshold $c$ and the step-size. We tune the stepsize separately for each clipping parameter $c$ over a logarithmic grid between $10^{-1}$ and $10^4$, ensuring that the optimal value is not at the edge of the grid. The best stepsize is selected as the one that reaches the target gradient norm $\epsilon = 10^{-2}$ the fastest. In Figure~\ref{fig:det3} we see that choosing smaller clipping radius allows for larger step-sizes, which speeds-up convergence overall.

In particular, we have verified that in the deterministic setting, clipping does not harm learning as long as the threshold is not too small compared to the target accuracy. Besides, clipping stabilizes learning, thus allowing for larger step-sizes, and thus faster convergence.

\paragraph{Stochastic setting.} We investigate clipped-SGD on both quadratic function with $\chi^2(1)$ stochastic noise, and logistic regression on \texttt{w1a} dataset. 
The results are plotted in Figure~\ref{fig:stoch}. They also verify the theory, since larger learning rates are always better when using clipping (compared to unclipped). It is also interesting to note that the curves are determined by the product $c \eta$, which is how large the step is when clipping happens. However, we see that in this case, with such small values of $c$, clipped-SGD does not quite reach the performance of vanilla SGD.

\section{Conclusion}
In this paper, we have rigorously analyzed gradient clipping, both in the deterministic setting and under standard noise assumptions. While previous works focus on exact convergence under strong assumptions (in particular, often for a fixed clipping threshold), we tightly characterized (with both upper and lower bounds) the bias introduced by clipped-SGD for any clipping threshold. 

Our work paves the way for better understanding clipping when used with other algorithms, such as accelerated or momentum methods or FedAvg. 
In particular, it can lead to an improved analysis of privacy guarantees in applications that rely on clipped SGD as an underlying black box%
, on the one hand for existing, but also future applications. %

\section*{Acknowledgments}
The authors would like to thank Ryan McKenna and Martin Jaggi for useful discussions. 
We also thank anonymous reviewers for their valuable comments.
SS acknowledges partial funding from a Meta Privacy Enhancing Technologies Research Award 2022. %
AK acknowledges funding from a Google PhD Fellowship.

\clearpage

{\small
\bibliography{reference}
\bibliographystyle{icml2023}
}

\newpage
\appendix
\onecolumn

\section{Implications of \lzlone smoothness}

\begin{lemma}\label{lem:L-smooth-impl}
	If Assumption~\ref{a:smooth} holds, then it also holds that
	\begin{align}
	f(\yy) - f(\xx) &\leq \nabla f(\xx)^\top (\yy - \xx) + \frac{(L_0 + \norm{\nabla f(\xx)} L_1)}{2} \norm{\xx - \yy}^2\,, && \forall \xx, \yy \in \R^d \text{ with } \norm{\xx-\yy} \leq \frac{1}{L_1}\,. \label{eq:1515}
	\end{align}
\end{lemma}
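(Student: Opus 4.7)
The plan is to follow the classical descent-lemma proof, integrating $\nabla f$ along the segment from $\xx$ to $\yy$, but being careful to always anchor the $(L_0,L_1)$-smoothness bound at $\xx$ so that the coefficient $L_0 + \|\nabla f(\xx)\| L_1$ appears (and not something involving $\nabla f$ at intermediate points).

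First I would write, via the fundamental theorem of calculus,
\begin{equation*}
f(\yy) - f(\xx) - \nabla f(\xx)^\top(\yy-\xx) = \int_0^1 \bigl[\nabla f(\xx + t(\yy-\xx)) - \nabla f(\xx)\bigr]^\top (\yy-\xx)\,dt,
\end{equation*}
and then bound the right-hand side by Cauchy--Schwarz. To apply Assumption~\ref{a:smooth} to the pair $(\xx,\ \xx + t(\yy-\xx))$, I need the distance between these two points, namely $t\|\yy-\xx\|$, to be at most $1/L_1$. Since $t \in [0,1]$ and we have assumed $\|\xx-\yy\| \leq 1/L_1$, this holds for every $t$ in the integration range. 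Applying Assumption~\ref{a:smooth} with $\xx$ as the anchor point (so that the gradient norm appearing in the multiplier is exactly $\|\nabla f(\xx)\|$) gives
\begin{equation*}
\bigl\|\nabla f(\xx + t(\yy-\xx)) - \nabla f(\xx)\bigr\| \leq (L_0 + \|\nabla f(\xx)\| L_1)\, t\,\|\yy-\xx\|.
\end{equation*}

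Plugging this into the integral and evaluating $\int_0^1 t\,dt = 1/2$ yields the claimed inequality \eqref{eq:1515}. The only subtle point — and the step I would emphasize in the write-up — is the choice of anchor in the Lipschitz-of-gradient inequality: using $\xx$ rather than the midpoint (or $\yy$) is what keeps the coefficient in terms of $\|\nabla f(\xx)\|$ alone, which is exactly the form used throughout the rest of the paper's analysis. No other obstacle arises; the argument is a direct adaptation of the standard smooth descent lemma, with the assumption's distance restriction $\|\xx-\yy\| \leq 1/L_1$ automatically propagating along the segment.
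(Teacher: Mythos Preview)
Your argument is correct and is exactly the standard proof of this descent lemma under \lzlone-smoothness: integrate along the segment, apply Assumption~\ref{a:smooth} with $\xx$ as the anchor so the multiplier is $L_0 + \norm{\nabla f(\xx)}L_1$, and use $\int_0^1 t\,dt = 1/2$. The paper itself does not write out a proof but simply refers to \citet{zhang2020improved}, Appendix~A.1, where the same integration argument is carried out; so your approach coincides with the intended one.
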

For the proof see \cite{zhang2020improved}, Appendix A.1.

\begin{lemma}\label{lem:L-smooth-conv-impl}
	If Assumption~\ref{a:smooth} holds, then it also holds that
	\begin{align*}
	\norm{\nabla f(\xx)}^2 \leq 2(L_0 +  L_1 \norm{\nabla f(\xx)})\left(f(\xx) - f^\star\right) && \forall \xx \in \R^d  \,,
	\end{align*}
	where $f^\star = \inf_{\xx}f(\xx)$.
\end{lemma}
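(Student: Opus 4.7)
The plan is to mimic the classical derivation of the PL-type inequality $\norm{\nabla f(\xx)}^2 \le 2L(f(\xx)-f^\star)$ for $L$-smooth functions, but using the quadratic upper model given by Lemma~\ref{lem:L-smooth-impl} in place of the standard descent lemma. The natural candidate step is a gradient step whose effective step-size is tailored to the local smoothness constant $L_0 + L_1 \norm{\nabla f(\xx)}$.

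Concretely, I would fix an arbitrary $\xx \in \R^d$ and define
\[
\yy \ := \ \xx - \frac{\nabla f(\xx)}{L_0 + L_1 \norm{\nabla f(\xx)}}.
\]
The first step is to verify that this $\yy$ lies in the regime where Lemma~\ref{lem:L-smooth-impl} applies, i.e.\ $\norm{\xx-\yy} \le 1/L_1$. This is the only subtle point, but it is immediate: $\norm{\xx-\yy} = \norm{\nabla f(\xx)}/(L_0 + L_1\norm{\nabla f(\xx)}) \le \norm{\nabla f(\xx)}/(L_1\norm{\nabla f(\xx)}) = 1/L_1$ (with the degenerate case $\nabla f(\xx) = 0$ making the lemma trivial, and $L_1=0$ reducing to the standard $L_0$-smooth argument).

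Next, I substitute this $\yy$ into the quadratic upper model from Lemma~\ref{lem:L-smooth-impl}. The linear term contributes $-\norm{\nabla f(\xx)}^2/(L_0+L_1\norm{\nabla f(\xx)})$, while the quadratic term contributes $+\tfrac{1}{2}\norm{\nabla f(\xx)}^2/(L_0+L_1\norm{\nabla f(\xx)})$, so that
\[
f(\yy) - f(\xx) \ \le \ -\frac{\norm{\nabla f(\xx)}^2}{2\bigl(L_0 + L_1\norm{\nabla f(\xx)}\bigr)}.
\]
Finally, using $f^\star \le f(\yy)$ and rearranging yields the claimed inequality. No step is really an obstacle here; the only thing to double-check is the admissibility constraint on $\norm{\xx-\yy}$, which is exactly what motivates choosing the step-size $1/(L_0+L_1\norm{\nabla f(\xx)})$ rather than $1/L_0$ or $1/(L_1\norm{\nabla f(\xx)})$ alone.
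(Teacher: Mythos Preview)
Your proposal is correct and essentially identical to the paper's proof: both choose the gradient step $\yy = \xx - \nabla f(\xx)/(L_0 + L_1\norm{\nabla f(\xx)})$, verify the admissibility constraint $\norm{\xx-\yy}\le 1/L_1$, plug into Lemma~\ref{lem:L-smooth-impl}, and rearrange using $f^\star \le f(\yy)$.
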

\begin{proof}[Proof of Lemma~\ref{lem:L-smooth-conv-impl}]
We start the proof by applying the previous Lemma~\ref{lem:L-smooth-impl} for $\yy= \xx - \frac{1}{L_0 + \norm{\nabla f(\xx)} L_1} \nabla f(\xx)$.
Note that $\norm{\xx-\yy} = \frac{\norm{\nabla f(\xx)}}{L_0 + \norm{\nabla f(\xx)} L_1} \leq \frac{1}{L_1}$ and we can apply the inequality:
\begin{align*}
 f^\star \leq f\left(\xx - \frac{1}{L_0 + \norm{\nabla f(\xx)} L_1} \nabla f(\xx)\right) \stackrel{\eqref{eq:1515}}{\leq} f(\xx) - \frac{1}{2(L_0 + \norm{\nabla f(\xx)} L_1)} \norm{\nabla f(\xx)}^2 \,,
\end{align*}
and rearranging gives us the desired property.
\end{proof}

\section{Deterministic proofs}
This section contains the main proofs from the paper. We skip the non-convex proof, since it will be a direct consequence of the stochastic result. %

\subsection{Convex case (Theorem~\ref{thm:det_cvx})}
Defining $\alpha_t = \min\{1, \frac{c}{\norm{\nabla f(\xx_t)}}\}$ we have:
\begin{align*}
\norm{\xx_{t+1} - \xx^\star}^2 &\leq \norm{\xx_{t} - \xx^\star - \eta \alpha_t \nabla f(\xx_{t})  }^2 = \norm{\xx_{t} - \xx^\star}^2 + \eta^2 \alpha_t^2 \norm{\nabla f(\xx_{t})}^2 - 2 \alpha_t \eta \langle \nabla f(\xx_{t}) , \xx_{t} - \xx^\star \rangle \\
&\leq \norm{\xx_{t} - \xx^\star}^2 + \eta^2 \alpha_t^2 \norm{\nabla f(\xx_{t})}^2 -2 \eta \alpha_t \left( f(\xx_{t})  - f^\star \right) \,.
\end{align*}
We consider two cases: when clipping happens, and when clipping does not happen. 

\paragraph{Case 1:} $\alpha_t = 1$, meaning that $ \norm{\nabla f(\xx_{t})} \leq c$. Then
\begin{align*}
\norm{\xx_{t+1} - \xx^\star}^2 \leq \norm{\xx_{t} - \xx^\star}^2 + \eta^2 \norm{\nabla f(\xx_{t})}^2 -2 \eta \left( f(\xx_{t})  - f^\star \right) \,.
\end{align*}
Using the implication of \lzlone smoothness and convexity in Lemma~\ref{lem:L-smooth-conv-impl}, 
\begin{align*}
\norm{\nabla f(\xx_{t})}^2 \leq 2(L_0 +  L_1 \norm{\nabla f(\xx_t)})\left(f(\xx_{t}) - f^\star\right) \leq  2(L_0 +  L_1 c)\left(f(\xx_{t}) - f^\star\right) \,.
\end{align*}
Further,
\begin{align*}
\norm{\xx_{t+1} - \xx^\star}^2 \leq \norm{\xx_{t} - \xx^\star}^2 + 2 (L_0 + L_1 c) \eta^2 \left( f(\xx_{t})  - f^\star \right) -2 \eta \left( f(\xx_{t})  - f^\star \right) \,,
\end{align*}
and by setting $\eta \leq \frac{1}{2(L_0 + L_1 c)}$ we obtain
\begin{align*}
\norm{\xx_{t+1} - \xx^\star}^2 \leq \norm{\xx_{t} - \xx^\star}^2  - \eta \left( f(\xx_{t})  - f^\star \right) \,.
\end{align*}

\paragraph{Case 2:}$\alpha_t = \frac{c}{\norm{\nabla f(\xx_{t})}}$, meaning that $ \norm{\nabla f(\xx_{t})} > c$. Then,
\begin{align*}
\norm{\xx_{t+1} - \xx^\star}^2 &\leq \norm{\xx_{t} - \xx^\star}^2 + \eta^2 c^2  -2 \eta \frac{c}{\norm{\nabla f(\xx_{t})}}  \left( f(\xx_{t})  - f^\star \right) \,.
\end{align*}
If it holds that $\eta^2 c^2 \leq \eta \frac{c}{\norm{\nabla f(\xx_{t})}}  \left( f(\xx_{t})  - f^\star \right)$, then we will get %
\begin{align}\label{eq:descent_second_case}
\norm{\xx_{t+1} - \xx^\star}^2 &\leq \norm{\xx_{t} - \xx^\star}^2 - \eta \frac{c}{\sqrt{2L}} \sqrt{ \left( f(\xx_{t})  - f^\star \right)} \,.
\end{align}
Lets now see under which stepsizes the condition $\eta \leq \frac{1}{c \norm{\nabla f(\xx_{t})}}  \left( f(\xx_{t})  - f^\star \right)$ holds by upper bounding the rhs. By \lzlone smoothness (and Lemma~\ref{lem:L-smooth-conv-impl}) we know that $(f(\xx_{t})  - f^\star) \geq \frac{\norm{\nabla f(\xx_t)}^2}{2 (L_0 + L_1 \norm{\nabla f(\xx_t)})}$ and thus
\begin{align*}
\frac{1}{c \norm{\nabla f(\xx_{t})}}  \left( f(\xx_{t})  - f^\star \right) \geq \frac{1}{2 (L_0 \frac{c}{\norm{\nabla f(\xx_t)}} + L_1 c)} \geq \frac{1}{2 (L_0 + L_1 c)} \,,
\end{align*}
where the last inequality is because $\frac{c}{\norm{\nabla f(\xx_t)}} \leq 1$ by our assumptions on $\alpha_t$ in this case. This means that using stepsize $\eta \leq \frac{1}{2 (L_0 + L_1 c)}$, it will hold that $\eta \leq \frac{1}{c \norm{\nabla f(\xx_{t})}}  \left( f(\xx_{t})  - f^\star \right)$ and thus \eqref{eq:descent_second_case} will hold.

\paragraph{Summing the two cases.} We define $\cT_1$ the set of iterations when clipping does not happen and $\cT_2$ as set of iterations when clipping happens. Taking the average over $T + 1$ iterations
\begin{align*}
\frac{1}{T + 1} \sum_{t \in \cT_1} (f(\xx_{t}) - f^\star )+ \frac{1}{T + 1} \sum_{t \in \cT_2} \frac{c}{\sqrt{2L}}\sqrt{f(\xx_{t}) - f^\star } \leq \frac{\norm{\xx_{0} - \xx^\star}^2}{\eta(T + 1)} \,.
\end{align*}
This means that both (i)
\begin{align*}
\frac{1}{T + 1} \sum_{t \in \cT_1} (f(\xx_{t}) - f^\star ) \leq \frac{\norm{\xx_{0} - \xx^\star}^2}{\eta (T + 1)}  \,,
\end{align*}
and (ii)
\begin{align*}
\frac{1}{T + 1} \sum_{t \in \cT_2} \sqrt{f(\xx_{t}) - f^\star} \leq \frac{\norm{\xx_{0} - \xx^\star}^2 \sqrt{2L} }{\eta c (T + 1)} \,.
\end{align*}
For the first inequality (i) using that $x^2 \geq 2 \epsilon x - \epsilon^2$ for any $\epsilon, x>0$, and defining for simplicity $A := \frac{\norm{\xx_{0} - \xx^\star}^2}{\eta (T + 1)} $ we get
\begin{align*}
\frac{1}{ T + 1}\sum_{t \in \cT_1}\left( 2 \epsilon \sqrt{f(\xx_{t}) - f^\star} - \epsilon^2 \right)\leq A \,,
\end{align*}
and thus, 
\begin{align*}
\frac{1}{ T + 1}\sum_{t \in \cT_1} \sqrt{f(\xx_{t}) - f^\star} \leq \frac{A}{2 \epsilon} + \frac{\epsilon}{2} \,.
\end{align*}
Choosing $\epsilon = \sqrt{A}$, we get
\begin{align*}
\frac{1}{ T + 1}\sum_{t \in \cT_1} \sqrt{f(\xx_{t}) - f^\star} \leq \sqrt{A} \leq \sqrt{\frac{\norm{\xx_0 - \xx^\star}^2}{{\eta (T + 1)}}} \,.
\end{align*}

This implies that
\begin{align*}
\frac{1}{T + 1} \sum_{t = 0}^{T}  \sqrt{f(\xx_{t}) - f^\star } \leq \sqrt{\frac{R_0^2}{\eta (T + 1)}} + \frac{R_0^2 \sqrt{2L}}{\eta c (T + 1)} \,.
\end{align*}
We further use that $f(\xx_{t + 1})\leq f(\xx_{t})$ and get a last-iterate convergence rate
\begin{align*}
\sqrt{f(\xx_{T}) - f^\star } \leq \sqrt{\frac{R_0^2}{\eta (T + 1)}} + \frac{R_0^2 \sqrt{2L}}{\eta c (T + 1)} \,.
\end{align*}
Squaring both of the sides, and using that $(a + b)^2 \leq 2 a^2 + 2 b^2 ~\forall a, b$, we get
\begin{align*}
f(\xx_{T}) - f^\star \leq \frac{2 R_0^2}{\eta (T + 1)} + \frac{4 L R_0^4 }{\eta^2 c^2 (T + 1)^2} \,.
\end{align*}

\subsection{Strongly convex case (Theorem~\ref{thm:det_scvx})}

\paragraph{Recursive argument.}
First, since the strongly convex function is also convex, we can apply the result of the previous theorem here to get
\begin{align*}
f(\xx_{T}) - f^\star \leq \frac{2 R_0^2}{\eta T} + \frac{4 L R_0^4 }{\eta^2 c^2 T^2} \,.
\end{align*}
We remind that $R_0 = \norm{\xx_{0} - \xx^\star}$. Using strong-convexity, we also know that 
\begin{align*}
f(\xx_{T}) - f^\star \geq \frac{\mu}{2} R_t^2 \,,
\end{align*}
Thus,
\begin{align*}
R_t^2 \leq \frac{4 R_0^2}{\mu \eta T} + \frac{8 L R_0^4}{\mu \eta^2 c^2 T^2} \,.
\end{align*}
Thus, to get $R_t^2 \leq \frac{R_0^2}{2}$, it is enough to take $t \geq \max\{\frac{16}{\mu \eta}, \frac{6 R_0 \sqrt{L}}{\eta c \sqrt{\mu}}\}$ (as both terms become less that $R_0^2/4$).

Repeating this argument, 
we can see the iteration complexity can be bounded by
\[
 T= \cO \left( \frac{1}{\mu \eta} \log \left( \frac{R_0^2}{\epsilon} \right) +  \frac{R_0 \sqrt{L}}{\eta c \sqrt{\mu }} \right) \,.
\]

\paragraph{Small gradients.} Let us start again from the convex bound (Theorem~\ref{thm:det_cvx}). Now, we will instead use that: 
\[
\sqnorm{\nabla f (\xx_t)} \leq 2L\left(f(\xx_t)-f^\star \right) \leq 2L\frac{R_0^2}{\eta t}\left(1 + \frac{R_0^2 L}{c^2 \eta t}\right)\,.
\]
Now introduce $t_0$ which is such that:
\begin{equation}
    t_0 = \frac{8L R_0^2}{\eta c^2}, 
\end{equation}
then we have that for all $t \geq t_0$:
\begin{equation}
    \norm{\nabla f(\xx_{t})} \leq c.
\end{equation}
In particular, we know that no clipping happens after $t_0$, and so we obtain the standard linear convergence rate, so that the final convergence time is: 
\begin{equation}
    T = O\left(\frac{1}{\eta \mu} \log\left(\frac{R_0^2}{\epsilon}\right) + \frac{L R_0^2}{\eta c^2}\right) \,.
\end{equation}

\paragraph{Comparing the two rates.} Note that no rate is better than the other, and we can use one or the other depending on the relationship between $c$ and $\sqrt{L \mu}R_0$. %

\section{Stochastic proofs.}\label{app:stoch_proofs}

We now proceed to the proof of Theorem~\ref{thm:csgd_large_c}. The proof will be in two parts: we will first prove convergence up to $\sigma^2$, and then refine this for large values of $c$.

\subsection{Preliminaries}

We now state a very simple lemma, which is direct but at the core of our decomposition, and so we highlight it here. 

\begin{lemma} \label{lemma:decomposition}
    For any $\alpha > 0$ and $\uu \in \R^d$, the following holds:
    \begin{equation} \label{eq:descent_term}
        - \nabla f(\xx)^\top \uu = - \frac{\alpha}{2}\sqnorm{\nabla f(\xx)} - \frac{1}{2\alpha}\sqnorm{\uu} + \frac{1}{2\alpha}\sqnorm{\uu - \alpha \nabla f(\xx)} \,.
    \end{equation}        
\end{lemma}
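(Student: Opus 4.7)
The plan is to prove this as a direct algebraic identity by expanding the last squared norm on the right-hand side. Concretely, I would start from
$\sqnorm{\uu - \alpha \nabla f(\xx)} = \sqnorm{\uu} - 2\alpha \langle \uu, \nabla f(\xx)\rangle + \alpha^2 \sqnorm{\nabla f(\xx)}$,
obtained by bilinearity of the inner product, and then divide through by $2\alpha$ (which is legal since $\alpha > 0$). Substituting the result into the right-hand side of \eqref{eq:descent_term}, the $\pm\frac{1}{2\alpha}\sqnorm{\uu}$ contributions cancel in pairs, the $\pm\frac{\alpha}{2}\sqnorm{\nabla f(\xx)}$ contributions cancel in pairs, and what survives is precisely $-\langle \uu, \nabla f(\xx)\rangle = -\nabla f(\xx)^\top \uu$, matching the left-hand side.

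There is no real obstacle here: the lemma is just a rewriting of the standard polarization (or ``three-point'') identity $2\langle a, b\rangle = \sqnorm{a} + \sqnorm{b} - \sqnorm{a-b}$ applied to $a = \uu$ and $b = \alpha \nabla f(\xx)$, then divided by $\alpha$. The only things to be careful about are the positivity of $\alpha$ (so that division is legitimate and the signs of the coefficients are what the statement claims) and bookkeeping of the factor $\alpha$ versus $1/\alpha$ in front of each squared norm. No assumption on $f$ is invoked; the identity is purely algebraic in the two vectors $\nabla f(\xx)$ and $\uu$, and $f$ need only be differentiable at $\xx$ for $\nabla f(\xx)$ to make sense.

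It is worth noting why the identity is stated in this particular asymmetric form rather than as the bare polarization identity: it is tailored for the convergence analysis that follows. In the proof of Theorem~\ref{thm:csgd_large_c} one will take $\uu = \eta \vg_t$, so that $-\frac{\alpha}{2}\sqnorm{\nabla f(\xx_t)}$ is the genuine descent contribution, $-\frac{1}{2\alpha}\sqnorm{\uu}$ combines with the quadratic excess coming from \lzlone-smoothness (Lemma~\ref{lem:L-smooth-impl}), and the residual $\frac{1}{2\alpha}\sqnorm{\uu - \alpha \nabla f(\xx_t)}$ becomes, after taking expectation, precisely the clipping bias $\sqnorm{\E[\clip(\nabla f_\xi(\xx_t))] - \alpha \nabla f(\xx_t)/\eta}$ that the sketch in the main text controls by $\sigma^2$ in the small-$c$ regime and by $\sigma^4/c^2$ in the large-$c$ regime.
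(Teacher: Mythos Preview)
Your proof is correct and is exactly the direct expansion the paper has in mind; the paper does not even write out a proof, simply calling the lemma ``direct.'' One small inaccuracy in your surrounding commentary (not in the proof itself): in the paper the lemma is applied with $\uu = \gg(\xx_t)$ rather than $\eta\gg(\xx_t)$, and the residual $\frac{1}{2\alpha}\sqnorm{\uu-\alpha\nabla f(\xx_t)}$ is bounded pointwise via the $1$-Lipschitzness of the clipping projection before taking expectation, rather than by moving the expectation inside the norm.
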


\subsection{First part of the proof: convergence up to $\sigma$ (small $c$)}

In this section for simplicity we assume that $c < 4 \sigma$ and prove that the gradient norm converges up to a level $\sigma$. Note that this assumption on $c$ is not restrictive since the case $c > 4 \sigma$ is covered by the other part of the proof, in which we show better convergence to $\cO(\frac{\sigma^2}{c})$.

\paragraph{Large gradients.} Let us start by assuming that $\norm{\nabla f(\xx_t)} \geq 6 \sigma$. Note that numerical constant is (relatively) arbitrary and could be tightened, but we choose it high to keep the proof clean and simple.  

We start the analysis by using \lzlone smoothness property from Lemma~\ref{lem:L-smooth-impl}. Note that for any stepsize $\eta < \frac{1}{L_0 + cL_1}$ it holds that  $\| \xx_{t + 1}- \xx_t \| = \eta \|\gg(\xx_t) \| \leq \eta c \leq \frac{1}{L_1}$
\begin{align}
    f(\xx_{t+1}) - f(\xx_t) &\leq - \eta \nabla f(\xx_t)^\top \gg(\xx_t) + \frac{\eta^2 (L_0 + \norm{\nabla f(\xx_t)} L_1)}{2} \sqnorm{\gg(\xx_t)}\nonumber\\
    &\leq - \eta \nabla f(\xx_t)^\top \gg(\xx_t) + \frac{\eta^2 (L_0 + \norm{\nabla f(\xx_t)} L_1)}{2}c^2\nonumber\\
    & \leq - \eta \nabla f(\xx_t)^\top \gg(\xx_t) + \frac{\eta^2 (L_0 + c L_1)}{2} c \norm{\nabla f(\xx_t)}\label{eq:main_small_c} \,,
\end{align}
where the last inequality is because we assumed that $c \leq 4 \sigma \leq \norm{\nabla f(\xx_t)}$. 

\paragraph{Uniformly bounded variance, def. \ref{def:unif_var}.} In this case, let us first assume that strong variance holds with constant $3$, \emph{i.e.}, that $\norm{\nabla f_\xi(\xx_t) - \nabla f(\xx_t)} \leq 3\sigma$ with probability one. In this case, we can write, where $\alpha_\xi = \min\left(1, c / \norm{\nabla f_\xi(\xx_t)}\right)$: 
\begin{align*}
    - \nabla f(\xx_t)^\top \gg(\xx_t) &= - \alpha_\xi \sqnorm{\nabla f(\xx_t)} - \alpha_\xi \nabla f(\xx_t)^\top \left( \nabla f_\xi(\xx_t) - \nabla f(\xx_t)\right)\\
    &\leq - \alpha_\xi \sqnorm{\nabla f(\xx_t)} + \alpha_\xi \norm{\nabla f(\xx_t)} \norm{\nabla f_\xi(\xx_t) - \nabla f(\xx_t)}\\
    &\leq - \alpha_\xi \sqnorm{\nabla f(\xx_t)} + 3\alpha_\xi \norm{\nabla f(\xx_t)} \sigma \\
    &\leq - \frac{\alpha_\xi}{2} \sqnorm{\nabla f(\xx_t)},
\end{align*}
where the last line follows from the fact that $\sigma < \norm{\nabla f(\xx_t)} / 6$. In particular, using the strong variance assumption, we know that $\norm{\nabla f_\xi(\xx_t)} \leq 2\norm{\nabla f(\xx_t)}$, so that $\alpha_\xi \geq \min(1, c / (2\norm{\nabla f(\xx_t)})) \geq c / (2\norm{\nabla f(\xx_t)})$. In particular: 
\begin{equation}
    - \nabla f(\xx_t)^\top \nabla f_\xi(\xx_t) \leq - \frac{c}{4} \norm{\nabla f(\xx_t)}.
\end{equation}
Then, we can plug this into Equation~\eqref{eq:main_small_c}, which leads to: 
\begin{equation}
    \esp{f(\xx_{t+1})} - f(\xx_t) \leq - \frac{\eta c}{4}\left(1 - 2 \eta (L_0 + c L_1)\right) \norm{\nabla f(\xx_t)} \,.
\end{equation}
In particular, choosing $\eta \leq  \left(4[L_0 + c L_1]\right)^{-1}$, we obtain: 
\begin{equation}
    \frac{\eta c}{8} \norm{\nabla f(\xx_t)} \leq f(\xx_t) - f(\xx_{t+1}) .
\end{equation}

\paragraph{Bounded variance in expectation, Def~\ref{def:var}.} In this case, we cannot write the same inequalities as before with probability $1$. However, we can still guarantee the bound with large enough probability. We define $\delta = \mathds{1}\{\norm{\nabla f_\xi(\xx) - \nabla f(\xx)} > 3\sigma\}$. We will use conditional expectations to write
\begin{align*}
    \esp{- \alpha_\xi \nabla f(\xx)^\top \nabla f_\xi(\xx)} \leq p(\delta = 0)\underbrace{\mathbb{E}\left[- \alpha_\xi \nabla f(\xx)^\top \nabla f_\xi(\xx) | \delta = 0\right]}_{:=T_1} + p(\delta = 1)\underbrace{\mathbb{E}\left[- \alpha_\xi \nabla f(\xx)^\top \nabla f_\xi(\xx) | \delta = 1\right]}_{:=T_2} \,.
\end{align*}
We bound the first term $T_1$ the same way as in previous case of uniformly bounded noise. For the second term, by Cauchy-Schwartz inequality, and defining $\alpha = \min\left(1, c / \norm{\nabla f(\xx)}\right)$ we write
\begin{align*}
    T_2 = \mathbb{E}\left[- \alpha_\xi \nabla f(\xx)^\top \nabla f_\xi(\xx) | \delta = 1\right] &\leq \norm{\nabla f(\xx)}\mathbb{E}\left[\norm{\alpha_\xi \nabla f_\xi(\xx)}| \delta = 1\right] \leq \alpha \sqnorm{\nabla f(\xx)} \,,
\end{align*}
where the last inequality is because we assumed that the full gradients are large $\norm{\nabla f(\xx)} > 6 \sigma$, but the clipping threshold is small $c \leq 4 \sigma$. Thus, the full gradients would always get clipped, and $\norm{\alpha \nabla f(\xx)} = c \geq \norm{\alpha_\xi \nabla f_\xi(\xx)}$. We remind that $\alpha_\xi = \min\left(1, c / \norm{\nabla f_\xi(\xx)}\right)$.%

Now, it just remains to bound $p(\delta = 1)$. Using Markov inequality, we have that: 
\begin{equation}
    p(\delta = 1) = p(\norm{\nabla f_\xi(\xx) - \nabla f(\xx)}^2 > 9\sigma^2) \leq 1/9.
\end{equation}
Similarly, $p(\delta = 0) = 1 - p(\delta = 1) \geq 8/9$. In the end, we obtain that:
\begin{equation}
    - \esp{\nabla f(\xx)^\top \Gxi} \leq - c \left(\frac{1}{4} \times \frac{8}{9} - \frac{1}{9}\right) \norm{\nabla f(\xx)} = - \frac{c}{9} \norm{\nabla f(\xx)} \,.
\end{equation}
We further plug the result into \eqref{eq:main_small_c}, and obtain
\begin{equation}
    \esp{f(\xx_{t+1})} - f(\xx_t) \leq - \frac{\eta c}{9}\left(1 - \frac{9 \eta}{2}(L_0 + c L_1)\right) \norm{\nabla f(\xx_t)},
\end{equation}
and so with $\eta \leq \left(9[L_0 + c L_1]\right)^{-1}$, we obtain: 
\begin{equation}\label{eq:proof-step}
    \esp{f(\xx_{t+1})} - f(\xx_t) \leq - \frac{\eta c}{18}\norm{\nabla f(\xx_t)}.
\end{equation}

\paragraph{Final convergence.}  If for at least one iteration $t$ it happens that the gradient norm is small $\norm{\nabla f(\xx_t)} \leq 6 \sigma$, then it simply holds that 
\begin{align*}
\min_{t\in[1, T]}\E \sqnorm{\nabla f(\xx_{t})} \leq O\left(\sigma^2\right).
\end{align*}
Otherwise, for all $t$ iterations the gradient norm is large $\norm{\nabla f(\xx_t)} > 6 \sigma$ and thus \eqref{eq:proof-step} holds for all the iterations. Averaging over $1 \leq t \leq  T + 1$, we obtain
\begin{equation}
    \frac{1}{T+ 1} \sum_{t=0}^T \norm{\nabla f(\xx_t)} \leq  \cO\left(\frac{f(\xx_0) - f^\star}{\eta c T}\right),
\end{equation}
Combining these two cases we conclude that 
\begin{align*}
\min_{t\in[1, T]}\E \sqnorm{\nabla f(\xx_{t})} \leq \cO\left(\sigma^2 +  \frac{f(\xx_0) - f^\star}{\eta c T}\right),
\end{align*}

\subsection{Second part of the proof: convergence up to $\sigma^2 / c$ (large $c$).}

In this second part we assume that the clipping radius is large, $c \geq 4 \sigma$.
Although, the algorithm \eqref{eq:clipped_sgd} clips the stochastic gradients $\nabla f_\xi(\xx_t)$, for the proof we will consider the two cases based on the full gradient $\nabla f(\xx_t)$: when the full gradient $\nabla f(\xx_t)$ is clipped and when it is not clipped. 

Similarly to previous case, we start by using \lzlone smoothness
\begin{equation} \label{eq:main_large_c}
    f(\xx_{t+1}) - f(\xx_t) \leq - \eta \nabla f(\xx_t)^\top \gg(\xx_t) + \frac{\eta^2 (L_0 + \norm{\nabla f(\xx_t)} L_1)}{2} \sqnorm{\gg(\xx_t)} \,.
\end{equation}

\paragraph{First case, full gradient is clipped $ \norm{\nabla f(\xx_t)} > c$.}

In this case, we use \eqref{eq:descent_term} with $\alpha = \frac{c}{\norm{\nabla f(\xx_t)}}$ and $\uu = \gg(\xx_t)$. Since $\alpha \nabla f(\xx_t) = \clip(\nabla f(\xx_t))$, this leads to %
\begin{equation} \label{eq:main_descent_bias}
     - \nabla f(\xx_t)^\top \gg(\xx_t) = - \frac{c}{2}\norm{\nabla f(\xx_t)} - \frac{1}{2\alpha}\sqnorm{\gg(\xx_t)} + \frac{1}{2\alpha}\sqnorm{\gg(\xx_t) - \clip(\nabla f(\xx_t))} \,.
\end{equation}
We now use that $\gg(\xx_t) = \clip(\nabla f_\xi(\xx_t))$, and use that clipping is a projection on onto a convex set (ball of radius $c$), and thus is Lipshitz operator with Lipshitz constant $1$, we write 
\begin{align*}
      - \nabla f(\xx_t)^\top \E \gg(\xx_t) &\leq - \frac{c}{2}\norm{\nabla f(\xx_t)} - \frac{1}{2\alpha}\esp{\sqnorm{\gg(\xx_t)}} + \frac{1}{2\alpha} \E \sqnorm{\nabla f_\xi(\xx_t) - \nabla f(\xx_t)}\\ 
      &\leq - \frac{c}{2}\norm{\nabla f(\xx_t)} - \frac{1}{2\alpha}\esp{\sqnorm{\gg(\xx_t)}} + \frac{\sigma^2}{2c} \norm{\nabla f(\xx_t)}\\ 
     &=  - \frac{1}{2\alpha}\esp{\sqnorm{\gg(\xx_t)}} - \frac{c}{2}\norm{\nabla f(\xx_t)} \left(1 - \frac{\sigma^2}{c^2}\right)\\
     &\leq  - \frac{\norm{\nabla f(\xx_t)}}{2c}\esp{\sqnorm{\gg(\xx_t)}} - \frac{c}{4}\norm{\nabla f(\xx_t)}, 
\end{align*}
where in the last line we used that $\nicefrac{\sigma^2}{c^2} \leq \nicefrac{1}{2}$ and $\alpha \leq 1$. Plugging this into \eqref{eq:main_large_c} we get
\begin{align*}
    \esp{f(\xx_{t+1})} - f(\xx_t) &\leq - \frac{\eta\norm{\nabla f(\xx_t)}}{2c}\esp{\sqnorm{\gg(\xx_t)}}  - \frac{\eta c}{4}\norm{\nabla f(\xx_t)} + \frac{\eta^2 (L_0 + \norm{\nabla f(\xx_t)} L_1)}{2} \esp{\sqnorm{\gg(\xx_t)}}\\
    &= - \frac{\eta c}{4}\norm{\nabla f(\xx_t)} - \frac{\eta\norm{\nabla f(\xx_t)}}{2c}\esp{\sqnorm{\gg(\xx_t)}} \left(1 - \eta c L_1\right) + \frac{\eta^2 L_0}{2} \esp{\sqnorm{\gg(\xx_t)}}\\
    &\leq - \frac{\eta c}{4}\norm{\nabla f(\xx_t)} -\frac{\eta}{2}\esp{\sqnorm{\gg(\xx_t)}} \left(1 - \eta c L_1\right) + \frac{\eta^2 L_0}{2} \esp{\sqnorm{\gg(\xx_t)}}\\
    &= - \frac{\eta c}{4}\norm{\nabla f(\xx_t)} -\frac{\eta}{2}\esp{\sqnorm{\gg(\xx_t)}} \left(1 - \eta [L_0 + c L_1]\right).
\end{align*}
In particular, choosing $\eta \leq (L_0 + c L_1)^{-1}$, we obtain: 
\begin{align}\label{eq:first_case}
    \frac{c}{4} \norm{\nabla f(\xx_t)} \leq \frac{f(\xx_t) - \E f(\xx_{t+1})}{\eta} \,.
\end{align}

Note that we do not obtain variance terms, but similarly to the previous section it is because we have assumed that the norm of the gradient is larger than $\sigma$, then the noise term can be hidden in the gradient norm term.

\paragraph{Second case, $ c > \norm{\nabla f(\xx_t)} > \nicefrac{c}{2}$.} The proof follows very closely the previous case with the difference that the full gradient $\nabla f(\xx_t)$ is not clipped. 
We use Equation~\eqref{eq:descent_term} with $\alpha = 1$. This leads to %
\begin{align*}
     - \nabla f(\xx_t)^\top \E \gg(\xx_t) &= - \frac{1}{2}\sqnorm{\nabla f(\xx_t)} - \frac{1}{2} \E \sqnorm{\gg(\xx_t)} + \frac{1}{2}\E \sqnorm{ \gg(\xx_t) - \nabla f(\xx_t)}\\
     &\leq - \frac{1}{2}\sqnorm{\nabla f(\xx_t)} - \frac{1}{2}\E \sqnorm{\gg(\xx_t)} + \frac{\sigma^2}{2} \,,
\end{align*}
where on the last line we used that clipping is Lipshitz operator with constant $1$, as it is a projection on a convex set. 
We now use that $- \norm{\nabla f(\xx_t)} \leq - c / 2$ for the first term and $1 \leq \norm{\nabla f(\xx_t)} / c$ for the last term:
\begin{align*}
     - \nabla f(\xx_t)^\top \E\gg(\xx_t) &\leq - \frac{1}{2}\E \sqnorm{\gg(\xx_t)} - \frac{c}{4}\norm{\nabla f(\xx_t)} + \frac{\sigma^2}{2c} \norm{\nabla f(\xx_t)}\\
     &\leq - \frac{1}{2}\E\sqnorm{\gg(\xx_t)} - \frac{c}{4}\norm{\nabla f(\xx_t)} \left(1 - 2\frac{\sigma^2}{c^2}\right)\\
     &\leq - \frac{1}{2}\E \sqnorm{\gg(\xx_t)} - \frac{c}{8}\norm{\nabla f(\xx_t)},
\end{align*}
where in the last line we used that $\sigma^2 / c^2 \leq 1/4$. Similarly to the previous case, we plug it into \eqref{eq:main_large_c} and use that $-1 \leq - \frac{\norm{\nabla f(\xx_t)}}{c}$ we have that
\begin{align*}
    \E f(\xx_{t+1}) - f(\xx_t) &\leq - \frac{\eta \norm{\nabla f(\xx_t)}}{2 c}\esp{\sqnorm{\gg(\xx_t)}} - \frac{c \eta }{8}\norm{\nabla f(\xx_t)} + \frac{\eta^2 (L_0 + \norm{\nabla f(\xx_t)} L_1)}{2} \esp{\sqnorm{\gg(\xx_t)}} \\
    &\leq  - \frac{c \eta }{8}\norm{\nabla f(\xx_t)} -  \frac{\eta}{2} \esp{\sqnorm{\gg(\xx_t)}} \left( \frac{\norm{\nabla f(\xx_t)}}{c} (1 - \eta c L_1)- \eta L_0 \right)\\
    &{\leq} - \frac{c \eta }{8}\norm{\nabla f(\xx_t)} - \frac{\eta}{2} \esp{\norm{\gg(\xx_t)}^2} \left(\frac{1}{2} - \eta [L_0 + c L_1]\right)
\end{align*}
where on the last line we used that $\norm{\nabla f(\xx_t)} > c/2$. Using that $\eta \leq \frac{1}{2}(L_0 + c L_1)^{-1}$
\begin{align}\label{eq:second_case}
    \frac{c}{8} \norm{\nabla f(\xx_t)} \leq \frac{f(\xx_t) - \E f(\xx_{t+1})}{\eta}  \,.
\end{align}

\paragraph{Third case, $\norm{\nabla f(\xx_t)} < \nicefrac{c}{2}$.}
In this case, we do not have convergence to the exact optimum. 

We start by defining $\delta_t = \mathds{1}\{\norm{\nabla f_\xi(\xx_t)} > c\}$ is the indicator function that at time step $t$ the stochastic gradient is getting clipped. We will start by showing that $\E \delta_t \leq \frac{4 \sigma^2}{c^2}$. 
\begin{align*}
\E \delta_t = \Pr[\delta_t = 1] = \Pr\left[\norm{\nabla f_\xi(\xx_t)} > c\right] \leq \Pr\left[\norm{\nabla f_\xi(\xx_t) - \nabla f(\xx_t)} > \frac{c}{2}\right] \leq \frac{4 \sigma^2}{c^2} \,,
\end{align*}
where the last inequality is due to Markov's inequality. The first inequality is because $\norm{\nabla f_\xi(\xx_t)} \leq \norm{\nabla f_\xi(\xx_t) - \nabla f(\xx_t)} + \norm{\nabla f(\xx_t)} \leq \norm{\nabla f_\xi(\xx_t) - \nabla f(\xx_t)} +\frac{c}{2}$.

Now that we have $\E \delta_t \leq \frac{4 \sigma^2}{c^2}$ we can use it to bound the difference $\sqnorm{\nabla f(\xx_t) - \E\gg(\xx_t)} $. In particular, since $\delta_t$ takes values $0$ and $1$, we have that $\esp{\delta_t} = p(\delta_t = 1)$ and so $\esp{\delta_t X} = \esp{\delta_t} \esp{X | \delta_t}$ for any random variable $X$.
\begin{align}
\sqnorm{\nabla f(\xx_t) - \E\gg(\xx_t)} = \norm{\E \left(1 - \frac{c}{\norm{\nabla f_\xi(\xx_t)}}\right)\nabla f_\xi(\xx_t) \delta_t}^2\nonumber = \esp{\delta_t}^2 \norm{\esp{\left(1 - \frac{c}{\norm{\nabla f_\xi(\xx_t)}}\right) \nabla f_\xi(\xx_t) | \delta_t = 1}}^2\nonumber \,.
\end{align}
At this point, we use Jensen inequality on the conditional expectation (since all terms are positive and the squared norm is a convex function) and get that: 
\begin{align}
\sqnorm{\nabla f(\xx_t) - \E\gg(\xx_t)} &\leq \esp{\delta_t}^2 \esp{\left(1 - \frac{c}{\norm{\nabla f_\xi(\xx_t)}}\right)^2 \norm{\nabla f_\xi(\xx_t)}^2 | \delta_t = 1}\nonumber\\
&\leq \esp{\delta_t}^2 \esp{\norm{\nabla f_\xi(\xx_t)}^2 | \delta_t = 1 }\nonumber\\
&\leq 2 \esp{\delta_t}^2 \esp{\norm{\nabla f_\xi(\xx_t) - \nabla f(\xx_t)}^2 | \delta_t = 1} + 2 \esp{\delta_t}^2 \esp{\norm{\nabla f(\xx_t)}^2 | \delta_t = 1}\nonumber\\
& \leq 2 \esp{\delta_t} \esp{\norm{\nabla f_\xi(\xx_t) - \nabla f(\xx_t)}^2} + 2\esp{\delta_t}^2 \norm{\nabla f(\xx_t)}^2\nonumber\\
& \leq \frac{8 \sigma^4}{c^2}  + \frac{32 \sigma^4}{c^4}\norm{\nabla f(\xx_t)}^2,\label{eq:bias_main}
\end{align}
where on the second line we used that $\left(1 - \frac{c}{\norm{\nabla f_\xi(\xx_t)}}\right)^2 \leq 1$ when $\delta_t = 1$, and on the last line that $\esp{\delta_t} \leq 4\sigma^2 / c^2$. We further use \eqref{eq:descent_term} with $\alpha = 1$ and $\uu = \E\gg(\xx_t)$, we get
\begin{align*}
- \nabla f(\xx)^\top \E\gg(\xx_t) &= - \frac{1}{2}\sqnorm{\nabla f(\xx)} - \frac{1}{2}\sqnorm{\E\gg(\xx_t)} + \frac{1}{2}\sqnorm{\E\gg(\xx_t) - \nabla f(\xx)}\\
&\leq - \frac{1}{2}\sqnorm{\nabla f(\xx)} - \frac{1}{2}\sqnorm{\E\gg(\xx_t)} +\frac{4 \sigma^4}{c^2}  + \frac{4 \sigma^2}{c^2}\norm{\nabla f(\xx_t)}^2\\
&\stackrel{\sigma \leq \frac{c}{4}}{\leq} - \frac{1}{4}\sqnorm{\nabla f(\xx)} - \frac{1}{2}\sqnorm{\E\gg(\xx_t)} +\frac{4 \sigma^4}{c^2}  \,.
\end{align*}

Plugging this into \eqref{eq:main_large_c}, for $\eta \leq \frac{1}{8(L_0 + cL_1)}$, we get by dropping the $\sqnorm{\E\gg(\xx_t)}$ term and using that $\norm{\nabla f(\xx_t)} \leq c$ that: 
\begin{align}
\E f(\xx_{t+1}) - f(\xx_t) &\leq - \frac{\eta}{4}\sqnorm{\nabla f(\xx)} - \frac{\eta}{2}\sqnorm{\E\gg(\xx_t)} +\frac{4 \eta \sigma^4}{c^2} + \frac{\eta^2 (L_0 + \norm{\nabla f(\xx_t)} L_1)}{2} \E \sqnorm{\gg(\xx_t)}\label{eq:start_difference}\\
&\leq - \frac{\eta}{4}\sqnorm{\nabla f(\xx)} +\frac{4 \eta \sigma^4}{c^2} + \frac{\eta^2 (L_0 + c L_1)}{2} \E \sqnorm{\gg(\xx_t) - \nabla f(\xx_t) + \nabla f(\xx_t)}\nonumber\\
&\leq - \frac{\eta}{4}\sqnorm{\nabla f(\xx)} +\frac{4 \eta\sigma^4}{c^2} + \eta^2 (L_0 +c L_1) \E \sqnorm{\gg(\xx_t) - \nabla f(\xx_t)} + \eta^2 (L_0 + c L_1) \norm{\nabla f(\xx_t)}^2\nonumber\\
&\stackrel{\eta \leq \frac{1}{8(L_0 + cL_1)}}{\leq} - \frac{\eta}{8}\sqnorm{\nabla f(\xx)} +\frac{4 \eta\sigma^4}{c^2} + \eta^2 (L_0 + c L_1) \E \sqnorm{\gg(\xx_t) - \nabla f(\xx_t)}. \nonumber
\end{align}
Note that clipping is the orthogonal projection onto the ball of radius $c$, which we denote ${\rm proj}_c$ and $\norm{\nabla f(x_t)} \leq c$, so it is not affected by the projection. In particular:
\begin{equation}
    \E\sqnorm{g(x_t) - \nabla f(x_t)} = \E\sqnorm{{\rm proj}_c(\nabla f_\xi (x_t)) - {\rm proj}_c(\nabla f(x_t))} \leq \E \sqnorm{\nabla f_\xi (x_t) - \nabla f(x_t)} \leq \sigma^2,
\end{equation}
and we thus get
\begin{equation}
    \E f(\xx_{t+1}) - f(\xx_t) \leq - \frac{\eta}{8}\sqnorm{\nabla f(\xx)} +\frac{4 \eta\sigma^4}{c^2} + \eta^2 (L_0 + c L_1) \sigma^2,
\end{equation}
and so: 
\begin{align}\label{eq:third_case_final}
    \frac{1}{8} \sqnorm{\nabla f(\xx_t)} \leq \frac{f(\xx_t) - \E f(\xx_{t+1})}{\eta} + \eta (L_0 + c L_1) \sigma^2 + \frac{4 \sigma^4}{c^2}.
\end{align}
In particular, we have:
\begin{itemize}
    \item One variance term that fades with the step-size.
    \item One bias term that remains even for very small step-sizes. 
\end{itemize}

\paragraph{Wrapping up. } We now combine the three cases above. Defining $\cT_1$ is the set of indices with $\norm{\nabla f(\xx_t)} \geq \frac{c}{2}$ (we note that this covers the first and the second cases from above, but both of them leads to the same final inequality \eqref{eq:second_case}), and $\cT_2$ is the set of indices with $\norm{\nabla f(\xx_t)} < \frac{c}{2}$, this inequality \eqref{eq:third_case_final} holds. Summing up over all the indices $1\leq t\leq T + 1$, we get
\begin{align*}
\frac{1}{ 8 (T + 1)} \left(\sum_{t\in \cT_1} c \E\norm{\nabla f(\xx_t)} + \sum_{t \in \cT_2} \E \sqnorm{\nabla f(\xx_t)}\right) \leq \frac{f(\xx_0) - f^\star}{\eta (T  +1) } + \eta (L_0 + c L_1) \sigma^2 + \frac{4 \sigma^4}{c^2} \,.
\end{align*}
This means that both (i)
\begin{align*}
\frac{1}{ 8 (T + 1)} \sum_{t\in \cT_1} c \E\norm{\nabla f(\xx_t)} \leq \frac{f(\xx_0) - f^\star}{\eta (T + 1) } + \eta (L_0 + c L_1) \sigma^2 + \frac{4 \sigma^4}{c^2} \,,
\end{align*}
and (ii)
\begin{align*}
\frac{1}{ 8 (T + 1)}\sum_{t \in \cT_2} \E \sqnorm{\nabla f(\xx_t)} \leq \frac{f(\xx_0) - f^\star}{\eta (T + 1) } + \eta (L_0 + c L_1) \sigma^2 + \frac{4 \sigma^4}{c^2} \,,
\end{align*}
for the last inequality using that $x^2 \geq 2 \epsilon x - \epsilon^2$ for any $\epsilon, x>0$, and defining for simplicity $A := 8 \frac{f(\xx_0) - f^\star}{\eta T } +8  \eta (L_0 + c L_1) \sigma^2 + \frac{32 \sigma^4}{c^2}$ we get
\begin{align*}
\frac{1}{ T + 1}\sum_{t \in \cT_2}\left( 2 \epsilon\E \norm{\nabla f(\xx_t)}  - \epsilon^2 \right)\leq A \,,
\end{align*}
and thus, 
\begin{align*}
\frac{1}{ T + 1}\sum_{t \in \cT_2} \E \norm{\nabla f(\xx_t)}\leq \frac{A}{2 \epsilon} + \frac{\epsilon}{2} \,.
\end{align*}
Choosing $\epsilon = \sqrt{A}$, we get
\begin{align*}
\frac{1}{ T + 1}\sum_{t \in \cT_2} \E \norm{\nabla f(\xx_t)}\leq \sqrt{A} \leq \sqrt{8 \frac{f(\xx_0) - f^\star}{\eta (T + 1) }} + \sqrt{8  \eta (L_0 + c L_1) \sigma^2 } + \sqrt{\frac{32 \sigma^4}{c^2}} \,.
\end{align*}
Summing up the two cases again, and using that $\frac{\sigma }{c} \leq \frac{1}{4}$ we get
\begin{align*}
\frac{1}{ T + 1}\sum_{t =0}^T \E \norm{\nabla f(\xx_t)} \leq \cO\left( \sqrt{\frac{f(\xx_0) - f^\star}{\eta T }}  +\frac{f(\xx_0) - f^\star}{\eta c T } + \sqrt{ \eta (L_0 + c L_1)} \sigma+ \frac{\sigma^2}{c} \right) \,.
\end{align*}

\subsection{Differentially Private SGD}\label{app:privacy}

\subsubsection{Modification to the proof to include mini-batches}
Using $\gg(\xx_t) = \frac{1}{B} \sum_{\xi \in \cB_t} \clip (\nabla f_{\xi}(\xx_t))$, the proof is exactly the same as in the previous case, with the only difference in the case where $c \geq 4 \sigma$ and small gradients (third case) $\norm{\nabla f(\xx_t)} < \frac{c}{2}$. Starting with equation~\eqref{eq:start_difference}, we obtain:

\begin{align*}
\E f(\xx_{t+1}) - f(\xx_t) &\leq - \frac{\eta}{4}\sqnorm{\nabla f(\xx)} - \frac{\eta}{2}\sqnorm{\E\gg(\xx_t)} +\frac{4 \eta \sigma^4}{c^2} + \frac{\eta^2 (L_0 + \norm{\nabla f(\xx_t)} L_1)}{2} \E \sqnorm{\gg(\xx_t)}\\
&\leq - \frac{\eta}{4}\sqnorm{\nabla f(\xx)} - \frac{\eta}{2}\sqnorm{\E\gg(\xx_t)}  +\frac{4 \eta\sigma^4}{c^2} + \frac{\eta^2 (L_0 +c L_1)}{2} \E \sqnorm{\gg(\xx_t) - \E \gg(\xx_t)}\\
&\qquad  + \frac{\eta^2 (L_0 + c L_1)}{2} \norm{\E \gg(\xx_t)}^2\nonumber \,.
\end{align*}
We now estimate the term variance term  $\E \sqnorm{\gg(\xx_t) - \E \gg(\xx_t)}$ more tightly in order to get the variance reduction due to the batch size $B$.

\begin{align*}
\E \sqnorm{\gg(\xx_t) - \E \gg(\xx_t)}&= \E \sqnorm{\frac{1}{B} \sum_{i \in \cB_t} \clip (\nabla f_{\xi_i}(\xx_t)) - \E \gg(\xx_t)} = \frac{1}{B^2} \sum_{i \in \cB_t}  \E\norm{\clip (\nabla f_{\xi_i}(\xx_t)) - \E \gg(\xx_t)}^2 \\
&\leq \frac{1}{B^2} \sum_{i \in \cB_t}  2 \E\norm{\clip (\nabla f_{\xi_i}(\xx_t)) - \nabla f(\xx_t)}^2  + \frac{2}{B} \norm{\nabla f(\xx_t)- \E \gg(\xx_t)}^2 \\
&\stackrel{\eqref{eq:bias_main}}{\leq} \frac{2 \sigma^2}{B}  + \frac{2}{B} \left[\frac{8 \sigma^4}{c^2}  + \frac{8 \sigma^2}{c^2}\norm{\nabla f(\xx_t)}^2\right]\\
& \leq \frac{2 \sigma^2}{B}  + \frac{2}{B} \left[\frac{\sigma^2}{2}  + 2 \sigma^2\right] \leq 6 \frac{\sigma^2}{B} \,,
\end{align*}
where we used that $\norm{\nabla f(\xx_t)}\leq \frac{c}{2}$ and that $\sigma \leq \frac{c}{4}$. The rest of the proof is exactly the same as before, by substituting now the $\sigma^2$ term with $\frac{\sigma^2}{B}$, we would arrive at the convergence rate of 
\begin{align*}
\frac{1}{ T}\sum_{t =0}^T \E \norm{\nabla f(\xx_t)} \leq \cO\left( \sqrt{\frac{f(\xx_0) - f^\star}{\eta T }}  +\frac{f(\xx_0) - f^\star}{\eta c T } + \sqrt{ \eta (L_0 + c L_1)} \frac{\sigma}{\sqrt{B}}+ \frac{\sigma^2}{c} \right) \,.
\end{align*}

\subsubsection{Modification to the proof to include stochastic noises}
The gradients applied in DP-SGD \eqref{eq:dp-sgd} have the form $\gg(\xx_t) + \zz_t$, where $\zz_t$ is a Gaussian noise with variance $\sigma_{\text{DP}}$. In order to add this additional Gaussian noise, we would need to modify the first step of the proof, that is using \lzlone smoothness  %
\begin{align*}
\E f(\xx_{t+1}) - f(\xx_t) &\leq - \eta \nabla f(\xx_t)^\top \gg(\xx_t) + \frac{\eta^2 (L_0 + \norm{\nabla f(\xx_t)} L_1)}{2} \sqnorm{\gg(\xx_t)}\nonumber + \frac{\eta^2 (L_0 + \norm{\nabla f(\xx_t)} L_1)}{2} \sigma_{\text{DP}}^2 \,.
\end{align*}
The rest of the proof remains the same, with having an additional $\sigma_{\text{DP}}^2$ term in the convergence. We thus would arrive to the following convergence rate where for simplicity we define $L = L_0 + \max_t \norm{\nabla f(\xx_t)} L_1$
\begin{align*}
O\bigg( \frac{L \eta }{c} \sigma_{\text{DP}}^2 + \sqrt{L \eta \sigma_{\text{DP}}} +  \min\bigg(\sigma,\frac{ \sigma^2}{c}\bigg)  + \sqrt{\eta L} \frac{\sigma}{\sqrt{B}} + \sqrt{\frac{F_0}{\eta T}} &+ \frac{F_0}{\eta T c} \bigg).
\end{align*}

\subsection{Lower bound}
\label{app:lower_bound}
We now prove the lower bound. 
\begin{proof}[Proofs of Theorems~\ref{thm:lb_small_c} and~\ref{thm:lb_large_c}.]
    Let us consider the simple noise $a \cB(p)$, where $a> 0$ and $\cB(p)$ is a Bernoulli random variable with mean $p \leq 1/2$. Consider a function such that the stochastic gradients are of the form:
    \begin{equation}
        \nabla f_\xi(x) = x + a \cB(p)     \,.
    \end{equation}
    Now consider $x = - p c / (1 - p)$. The stochastic gradient at $x$ when the Bernoulli is $0$ is not clipped, since $|x| = pc / (1 - p) \leq c$. Yet, the stochastic gradient for positive values of the Bernoulli random variable is:
    \begin{equation}
        \nabla f_a(x) = -pc / (1 - p) + a \geq a - c \geq c.
    \end{equation}
    In particular, we have that: 
    \begin{equation}
        \esp{\clip(\nabla f_\xi(x))} = (1 - p)x + pc = (1 - p) \times (-pc) / (1 - p) + p c = 0.
    \end{equation}
    Let us now evaluate $\nabla f(x)$. We have:
    \begin{equation}
        \nabla f(x) = x + pa = p\left(a - \frac{c}{1-p}\right).
    \end{equation}

\paragraph{Small $c$.} Now fix a clipping radius $c$, such that $c \leq 2\sigma$, and take $a = 4\sigma$. We choose $p(1-p) = 1/16$, so that $p = (2 - \sqrt{3})/4 \leq 1/4$. In this case,
\begin{equation}
        \nabla f(x) = p\left(a - \frac{c}{1-p}\right) \geq p\left(4 \sigma - 2 \sigma \times \frac{4}{3}\right) \geq \frac{(2 - \sqrt{3})\sigma}{3} \geq \frac{\sigma}{12}.
    \end{equation}

\paragraph{Large $c$.} Now fix a clipping radius $c$, such that $c \geq \sigma$ and $c \leq a / 2$. To ensure that the noise has variance $\sigma^2$, $p$ has to be such that:
    \begin{equation}
        p(1 - p) = \sigma^2 / a^2 \leq 1/16 \,.
    \end{equation}
    Thus, we have that $p \leq 1/4$ (since we chose $p<1/2$). In particular, also using that $c \leq a/2$:
    \begin{equation}
        \nabla f(x) = \frac{\sigma^2}{a^2(1-p)}\left(a - \frac{c}{1-p}\right) \geq \frac{\sigma^2}{3 a(1-p)} \geq \frac{\sigma^2}{3 a} \,.
    \end{equation}
    It now remains to choose $a = 2c$ (which satisfies all previous conditions), and we obtain:
    \begin{equation}
        \nabla f(x) \geq \frac{\sigma^2}{6 c} \,.
    \end{equation}
\end{proof}

\end{document}